\documentclass[10pt,journal]{IEEEtran}

\usepackage{amsmath,amssymb,amsfonts}
\usepackage{amsthm}
\usepackage{bm}
\usepackage{mathtools}
\usepackage{booktabs}
\usepackage{graphicx}
\usepackage[caption=false,font=footnotesize]{subfig}
\usepackage{placeins}
\usepackage{dblfloatfix}

\usepackage{algorithm}
\usepackage{algpseudocode}
\usepackage{array}
\usepackage{multirow}
\usepackage{microtype}
\usepackage{cite}
\usepackage{url}


\newcommand{\R}{\mathbb{R}}
\newcommand{\X}{\mathcal{X}}
\newcommand{\Ocal}{\mathcal{O}}
\newcommand{\Mcal}{\mathcal{M}}
\newcommand{\Rcal}{\mathcal{R}}
\newcommand{\Bcal}{\mathcal{B}}
\newcommand{\Ecal}{\mathcal{E}}
\newcommand{\Zcal}{\mathcal{Z}}
\newcommand{\Hcal}{\mathcal{H}}
\newcommand{\Pcal}{\mathcal{P}}

\newcommand{\ThetaK}{\Theta_{\mathrm{KPD}}}
\newcommand{\od}{\circ}
\newcommand{\had}{\odot}
\newcommand{\norm}[1]{\left\lVert #1\right\rVert}

\newcommand{\diag}{\operatorname{Diag}}
\newcommand{\vecop}{\operatorname{vec}}

\newcommand{\argmin}{\operatorname*{arg\,min}}

\newtheorem{proposition}{Proposition}
\newtheorem{remark}{Remark}

\title{Kronecker-Structured Video Completion with Cross-Fitted Self-Supervised Residual Learning}

\author{Qi Qi, Yanfei Wang, Yanjun Cui, and Jun-e Feng%
\thanks{This work was supported by National Natural Science Foundation of China under Grant 62673293, 62350037, and the Research Fund for the Taishan Scholar Project of Shandong Province of China (tstp20221103). (\textit{Corresponding author: Jun-e Feng}.)}%
\thanks{Qi Qi is with the Mathematical Research Center,
Shandong University, Jinan 250100, China
(e-mail: qqiqi@mail.sdu.edu.cn).}%
\thanks{Yanfei Wang is with the School of Mathematics,
Shandong University, Jinan 250100, China
(e-mail: wangyanfei@mail.sdu.edu.cn).}%
\thanks{Yanjun Cui is with Dartmouth College,
Hanover, NH 03755, USA
(e-mail: yanjun.cui.gr@dartmouth.edu).}%
\thanks{Jun-e Feng is with the School of Mathematics,
Shandong University, Jinan 250100, China,
and also with Semi-tensor Product of Matrices:
Theory and Applications, School of Mathematics and
Systems Science, Liaocheng University, Liaocheng
252000, China (e-mail: fengjune@sdu.edu.cn).}%
}

\begin{document}
\maketitle

\begin{abstract}
Tube-wise video completion reconstructs temporal trajectories at spatial locations unobserved in every frame. We propose a two-stage framework combining regularized masked Kronecker product decomposition (KPD) with cross-fitted self-supervised residual learning. First, we estimate a compact structural backbone directly from observed entries using cyclic component-wise least-squares updates, spatial and temporal factor regularization, and regularizer-optimal scale balancing. The exact updates yield a nonincreasing objective. Second, we address the mismatch between in-sample fitting errors and the prediction errors encountered at missing tubes. We partition observed tubes into folds and construct residual targets using backbone models fitted without the corresponding target tubes. A lightweight 3-D convolutional neural network then predicts corrections from the backbone, visible residual context, observation mask, and local detail features. Exact data consistency preserves observed measurements. Experiments include ablations, sensitivity analysis, grayscale and RGB video completion, and structured missing patterns. Our method achieves the highest mean PSNR and SSIM at the higher tested sampling rates on the grayscale benchmark. On eight selected DAVIS RGB sequences, it achieves the highest mean missing-region PSNR and full-frame SSIM and the lowest mean missing-tube temporal-gradient MAE at every evaluated sampling rate.
\end{abstract}

\begin{IEEEkeywords}
Video completion, Kronecker product decomposition, tensor completion,
self-supervised learning, cross-fitting, residual learning, low-rank modeling.
\end{IEEEkeywords}

\section{Introduction}
\label{sec:introduction}

Video completion reconstructs missing visual information from partial
measurements. A video contains both spatial structure and temporal evolution,
which provide complementary information for reconstruction. This paper considers tube-wise missingness: a spatial
location is either observed throughout the video or missing across all
frames. Thus, the complete temporal trajectory at each unobserved location
is unavailable. Recovering these trajectories requires information from the observed
spatial tubes and the relationships among video frames. Tensor
representations describe these relationships by retaining the spatial
and temporal modes of the data
\cite{kolda2009tensor}. Tensor completion extends the principle of low-rank
matrix recovery \cite{candes2009matrix,cai2010svt} to this multiway setting,
using compact structure to infer missing observations.

Low-rank tensor completion can be formulated using tensor unfoldings or direct factor models. Tensor-unfolding nuclear norms impose low-rank constraints on matrix representations \cite{liu2013tensor}, while weighted CP factorization fits separable components to observed entries \cite{acar2011scalable}. Factor smoothness \cite{yokota2016smooth}, Tensor-Train models \cite{bengua2017tt}, and joint CP--Tucker constraints \cite{liu2020cptucker} offer different ways to represent multiway structure. Adaptive low-rank estimation and direct factor optimization \cite{zhang2020adaptive,zhou2018factorization} provide additional formulations, while fully connected and wheel-shaped tensor networks \cite{zheng2021fctn,wu2022tensorwheel} represent more complex interactions among modes. These approaches motivate fitting a compact structural model directly to the available measurements.

Other tensor-completion methods exploit low-rank structure in transformed coordinates. The tensor--tensor product \cite{kilmer2011factorization} underlies tensor-SVD completion and its recovery analysis \cite{zhang2014novel,zhang2017exact}. The twist tensor nuclear norm reorganizes spatiotemporal information for video recovery \cite{hu2017twist}. Invertible transforms \cite{kernfeld2015invertible,lu2019invertible}, framelet and patched-tube representations \cite{jiang2020framelet,ng2020patched}, and robust or structured tensor nuclear-norm models \cite{lu2020trpca,song2020robust,xu2022fasttnn} extend this approach. For both factor-based and transform-based methods, recovery depends on how incomplete observations constrain the estimated structure.

Visual reconstruction also benefits from modeling signal regularity and relationships beyond those captured by global rank constraints. Coupled transforms \cite{wang2021coupled}, global--local--nonlocal priors \cite{zhao2022complementary}, temporal smoothness \cite{wang2023tctv}, low-rank dictionary coding \cite{jiang2023dictionary}, and smooth factor representations \cite{wu2024smooth} incorporate complementary signal information. Patch-tracked streaming tensor-ring models use temporal relationships among video patches \cite{he2022patchtracking}. Other structural approaches include Kronecker-basis sparsity \cite{xie2018kbr}, tensor low-rank representations \cite{zhou2021tlrr,yang2024gtlrr}, and higher-order t-SVD \cite{qin2022highorder}. These models seek to preserve detail while compactly representing high-dimensional video data.

Learned tensor representations further adapt structural models to the observed signal. Examples include self-supervised nonlinear transforms \cite{luo2022s2ntnn}, hierarchical low-rank factorization \cite{luo2022hlrtf}, and learnable group-tube, coupled, and adaptive transforms \cite{li2023grouptube,wang2024conot,wu2024salts}. Continuous tensor functions \cite{luo2024lrtfr}, deep rank-one factorization \cite{li2025drotff}, learned low-rank metrics \cite{wang2025otlrm}, multidimensional nonlinear transforms \cite{shi2025multidimensional}, DELTA \cite{yang2026delta}, and compact tensor-wheel modeling \cite{yang2026twlrs} further develop learned tensor representations. For tube-wise video completion, two related tasks remain: estimating a compact structural model from incomplete observations and predicting details that the model does not recover.

We address structural estimation through a regularized masked Kronecker product decomposition (KPD) algorithm. Cheng's component least-squares algorithm (CLSA) constructs finite-sum Kronecker decompositions through nearest-Kronecker-product subproblems \cite{cheng2026clsa}. For third-order videos, the vectorized KPD uses the same separable rank-$K$ parameterization as CP decomposition. Our contribution lies in the masked formulation and its optimization. All components are fitted to a joint observed-entry objective with spatial and temporal regularization and revisited through cyclic least-squares updates. Equivalent factor scalings are balanced to minimize the regularization penalty. This procedure estimates the structural backbone directly from observed data. We also establish that its exact cyclic updates produce nonincreasing objective values.

A compact structural backbone captures the main spatiotemporal structure, while a learned residual can refine details beyond the chosen rank. Neural video inpainting also uses information across frames. For example, graph-memory modeling aggregates cross-frame information for reconstruction \cite{zhou2024graphmemory}. In our tube-wise setting, the predictor combines the masked KPD backbone with residual context from observed spatial tubes. Training the residual predictor requires targets that correspond to prediction errors at unobserved locations.

Self-supervised restoration provides useful precedents for learning from
incomplete measurements. Deep Image Prior fits an image-specific
representation from a single degraded observation
\cite{ulyanov2018dip}, whereas Noise2Self and Self2Self use held-out or
randomized observations to construct training signals
\cite{batson2019noise2self,quan2020self2self}. Auxiliary inpainting has
also been used to create self-supervision in multi-focus image fusion
\cite{lin2025fusion2void}. These approaches motivate constructing supervision by temporarily
hiding a subset of the observed data. In a
structure--residual framework, however, the structural estimator must also
exclude the held-out observations. If a tube contributes to fitting the backbone and is
then reused to supervise the residual predictor, its target is an
\emph{in-sample fitting error}. At a genuinely missing tube, the desired
correction concerns the backbone's \emph{out-of-sample prediction error}.
Masking a tube only at the residual-network input does not separate these
two errors if the backbone was fitted using that tube.

We construct residual supervision by applying sample splitting and
cross-fitting \cite{chernozhukov2018dml} at the level of the structural
estimator. Observed tubes are divided into folds, with each fold
predicted by a backbone fitted without its target tubes. The resulting
residuals measure held-out structural prediction errors and serve as
training targets for a lightweight per-video 3-D convolutional neural
network (CNN). At inference, the backbone is fitted to all available
tubes, and the trained network predicts corrections at missing locations.
This procedure separates global structure estimation from the
construction of residual supervision without requiring complete training
videos.

Specifically, we express the reconstruction as a structural backbone
$\Bcal_K$ and a learned residual correction $\Rcal$:
\begin{equation}
    \X \approx \Bcal_K + \Rcal.
    \label{eq:intro_structure_residual}
\end{equation}
The underlying KPD reconstruction is a finite sum of
separable components,
\begin{equation}
    \widetilde{\X}_K = \sum_{r=1}^{K}\bm a_r\od\bm b_r\od\bm c_r,
    \qquad
    \vecop(\bm a_r\od\bm b_r\od\bm c_r)
    = \bm c_r\otimes\bm b_r\otimes\bm a_r.
    \label{eq:intro_kpd_form}
\end{equation}
For normalized intensities, the structural backbone is
$\Bcal_K=\Pi_{[0,1]}(\widetilde{\X}_K)$, where $\Pi_{[0,1]}$ clips
entries to $[0,1]$. Stage~I estimates the factors directly from observed
entries using regularized masked component least squares. To construct
Stage~II training targets, the observed spatial tubes are partitioned into
folds $\{\Gamma_p^{(f)}\}_{f=1}^{F}$. A fold-specific backbone
$\Bcal_K^{(f)}$ is fitted with the tubes in $\Gamma_p^{(f)}$ withheld,
and the target residual is computed as
\begin{equation}
    \Rcal_p^{(f)}
    = \Pcal_{\Omega_p^{(f)}}
      \bigl(\Ocal-\Bcal_K^{(f)}\bigr),
    \label{eq:intro_crossfit_target}
\end{equation}
where $\Omega_p^{(f)}$ contains all voxels in the held-out tubes.
The network learns these residual targets from four inputs: the bounded
backbone, visible residuals, the sampling mask, and a local spatial
high-frequency feature. The learned correction is then combined with
the backbone, and an exact data-consistency step preserves the observed
video measurements.

The main contributions are as follows:
\begin{itemize}
    \item \textbf{Regularized masked KPD estimation.}
    We formulate a joint observed-entry KPD objective with spatial and
    temporal factor regularization and derive cyclic component-wise
    least-squares updates. Regularizer-optimal scale balancing preserves
    each represented component while minimizing its factor penalty.
    Proposition~\ref{prop:monotonicity} establishes the nonincreasing
    objective property of the exact update steps.

    \item \textbf{Cross-fitted residual supervision.}
    We construct residual targets by partitioning observed tubes into
    folds and excluding each target fold from its corresponding masked
    KPD fit. A lightweight residual predictor is trained on these
    held-out structural prediction errors rather than in-sample fitting
    residuals. Proposition~\ref{prop:no-leakage} establishes the
    hold-out property of the target construction.

    \item \textbf{Integrated video completion framework.}
    We combine masked KPD estimation and cross-fitted residual learning
    with a per-video 3-D CNN and exact data consistency. Experiments on
    grayscale and RGB videos examine representation efficiency,
    residual-supervision and input ablations, sampling-rate comparisons,
    and structured missing patterns.
\end{itemize}

The remainder of this paper is organized as follows.
Section~\ref{sec:problem} introduces the observation model and
Kronecker-structured representation. Section~\ref{sec:masked} develops
the regularized masked KPD estimator and its update properties.
Section~\ref{sec:crossfit} presents cross-fitted residual supervision,
network training, and data-consistent inference.
Section~\ref{sec:experiments} reports ablation studies and video
completion comparisons. Section~\ref{sec:conclusion} concludes the paper.

\section{Notation and Problem Formulation}
\label{sec:problem}

\subsection{Video tensor and observation model}

Let
\begin{equation}
    \X \in \R^{I\times J\times T}
\end{equation}
represent a grayscale video with $T$ frames of size $I\times J$.
The entry $x_{ijt}$ is the intensity at position $(i,j)$ in frame $t$.
Let the observation mask be
\begin{equation}
    \Mcal\in\{0,1\}^{I\times J\times T},
\end{equation}
where $m_{ijt}=1$ indicates an observed entry and $m_{ijt}=0$ indicates a
missing entry. The degraded observation is
\begin{equation}
    \Ocal=\Mcal\had\X,
    \label{eq:observation}
\end{equation}
where $\had$ denotes the Hadamard product.

For a general observation set
\begin{equation}
    \Omega=\{(i,j,t):m_{ijt}=1\},
\end{equation}
define the projection operator
\begin{equation}
    [\Pcal_{\Omega}(\X)]_{ijt}
    =
    \begin{cases}
        x_{ijt}, & (i,j,t)\in\Omega,\\
        0, & \text{otherwise}.
    \end{cases}
\end{equation}
Thus, \eqref{eq:observation} is equivalent to
$\Ocal=\Pcal_\Omega(\X)$.

We consider \emph{mode-3 tube-wise sampling}, where the same spatial
mask is applied to every frame. The observed spatial tubes form the set
\begin{equation}
    \Gamma_\Omega
    \subseteq
    \{1,\ldots,I\}\times\{1,\ldots,J\}.
\end{equation}
For tube-wise sampling,
\begin{equation}
    m_{ijt}=m_{ij},\qquad t=1,\ldots,T.
    \label{eq:tube-mask}
\end{equation}
Each observed spatial location therefore provides a complete temporal
tube, while all other tubes remain unavailable. The sampling rate is
\begin{equation}
    \mathrm{SR}
    =
    \frac{|\Gamma_\Omega|}{IJ}.
\end{equation}

The residual-learning stage uses this tube-wise mask, whereas the masked
KPD estimator in Section~\ref{sec:masked} also accommodates general
entry-wise observations.

\subsection{Kronecker-structured representation}

For vectors
\begin{equation}
    \bm a_r\in\R^I,\qquad
    \bm b_r\in\R^J,\qquad
    \bm c_r\in\R^T,
\end{equation}
define a rank-one third-order tensor
\begin{equation}
    \mathcal{K}(\bm a_r,\bm b_r,\bm c_r)
    =
    \bm a_r\od\bm b_r\od\bm c_r,
\end{equation}
whose entries satisfy
\begin{equation}
    [\bm a_r\od\bm b_r\od\bm c_r]_{ijt}
    =
    a_{ir}b_{jr}c_{tr}.
\end{equation}
Equivalently, under a consistent vectorization convention,
\begin{equation}
    \vecop\!\left(
    \bm a_r\od\bm b_r\od\bm c_r
    \right)
    =
    \bm c_r\otimes\bm b_r\otimes\bm a_r,
\end{equation}
making the Kronecker structure explicit.

A $K$-component representation is
\begin{equation}
    \X_K
    =
    \sum_{r=1}^{K}
    \bm a_r\od\bm b_r\od\bm c_r.
    \label{eq:kpd}
\end{equation}
Let
\begin{equation}
    A=[\bm a_1,\ldots,\bm a_K],\quad
    B=[\bm b_1,\ldots,\bm b_K],\quad
    C=[\bm c_1,\ldots,\bm c_K].
\end{equation}
The factor matrices $A$, $B$, and $C$ define the separable model in
\eqref{eq:kpd}, with the same parameterization as third-order CP
decomposition \cite{cheng2026clsa}. Observed-entry CP models use weighted
least squares \cite{acar2011scalable}, and standard CP-ALS alternates between
factor matrices \cite{kolda2009tensor}. Stage~I instead updates individual
components under the regularized masked objective
\eqref{eq:masked-objective} and balances their scales. The fitted tensor
provides the structural backbone for residual prediction.

We compare the storage costs of separable KPD/CP, matrix SVD, t-SVD,
and Tucker representations using the normalized storage ratio
\begin{equation}
    \rho_{\rm model}
    =
    \frac{P_{\rm model}}{IJT}\times100\%,
    \label{eq:representation-budget}
\end{equation}
where $P_{\rm model}$ is the number of stored representation parameters. Under
the implementations used here,
$P_{\rm KPD}=K(I+J+T)$,
$P_{\rm SVD}=r(IJ+T)$,
$P_{\rm tSVD}=rT(I+J+1)$ for tubal rank $r$, and
$P_{\rm Tucker}=Ir_1+Jr_2+Tr_3+r_1r_2r_3$ for multilinear rank
$(r_1,r_2,r_3)$. For t-SVD, this count includes the two truncated factor
tensors and the $r$ f-diagonal singular tubes, without an
additional reduction for Fourier conjugate symmetry.

Figure~\ref{fig:kpd-representation-efficiency} compares four representation
families on the complete development videos Akiyo, Foreman, and Mobile:
separable KPD, matrix SVD, truncated t-SVD, and Tucker-HOSVD. They represent
Kronecker-structured, matrix-spectral, tensor-spectral, and multilinear-rank
models, respectively. For each model, sequence-level fidelity is interpolated
within the storage interval shared by all three videos and then averaged.
The subsequent completion experiments fit KPD directly to masked data.

The star marks the $K=4$ backbone used in grayscale completion, whose stored parameter count is
about $0.062\%$ of the voxel count. At this
operating point, KPD and Tucker-HOSVD have comparable approximation
fidelity, while the shared storage intervals for truncated t-SVD and
matrix SVD begin at larger budgets. Spectral and multilinear
models reach higher fidelity at larger budgets. We use the compact KPD
representation as a backbone and learn its residual correction.

\begin{figure}[!t]
\centering
\subfloat[Approximation PSNR versus normalized storage.]{
    \includegraphics[width=0.46\columnwidth]{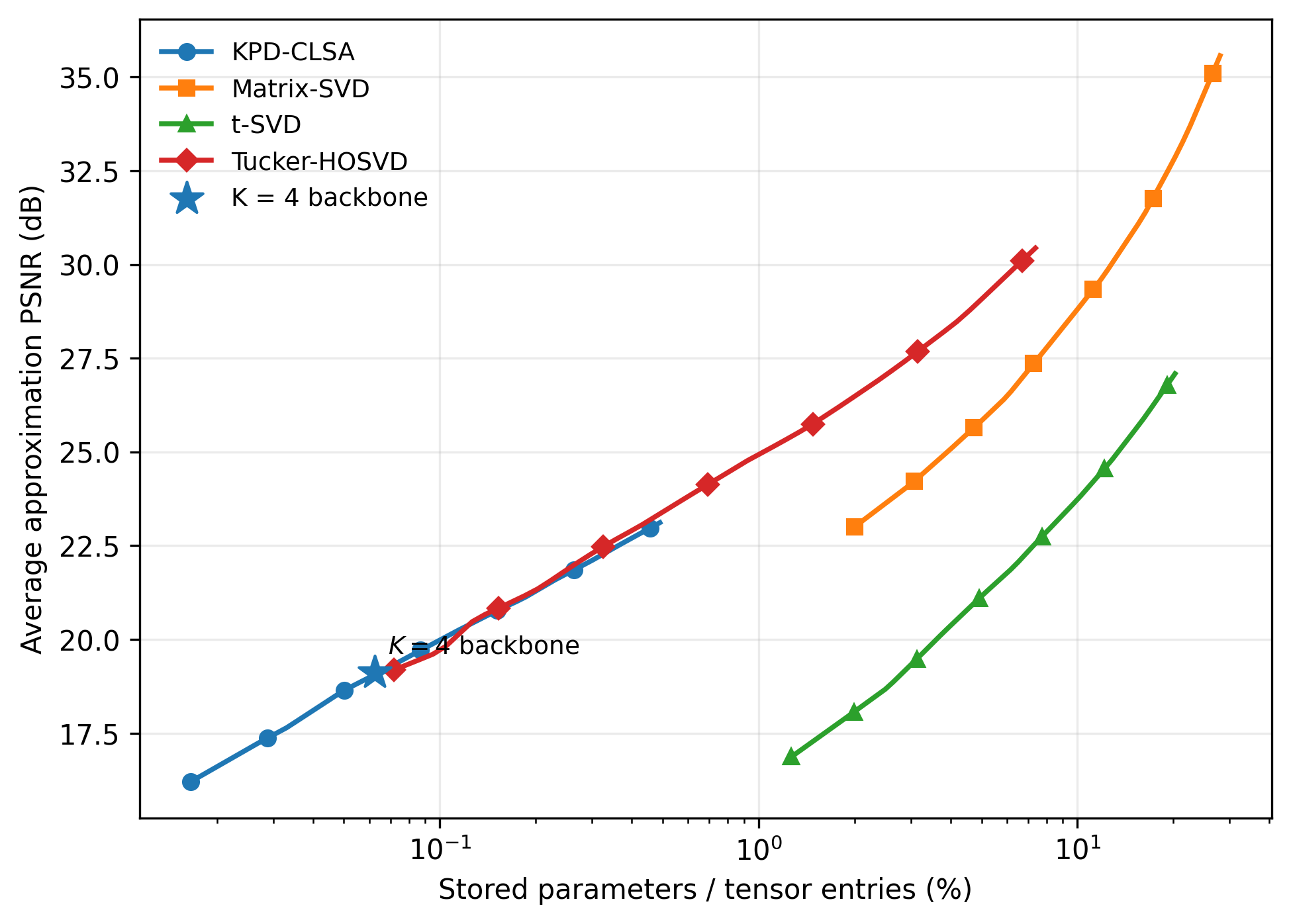}
}
\hfill
\subfloat[Approximation SSIM versus normalized storage.]{
    \includegraphics[width=0.46\columnwidth]{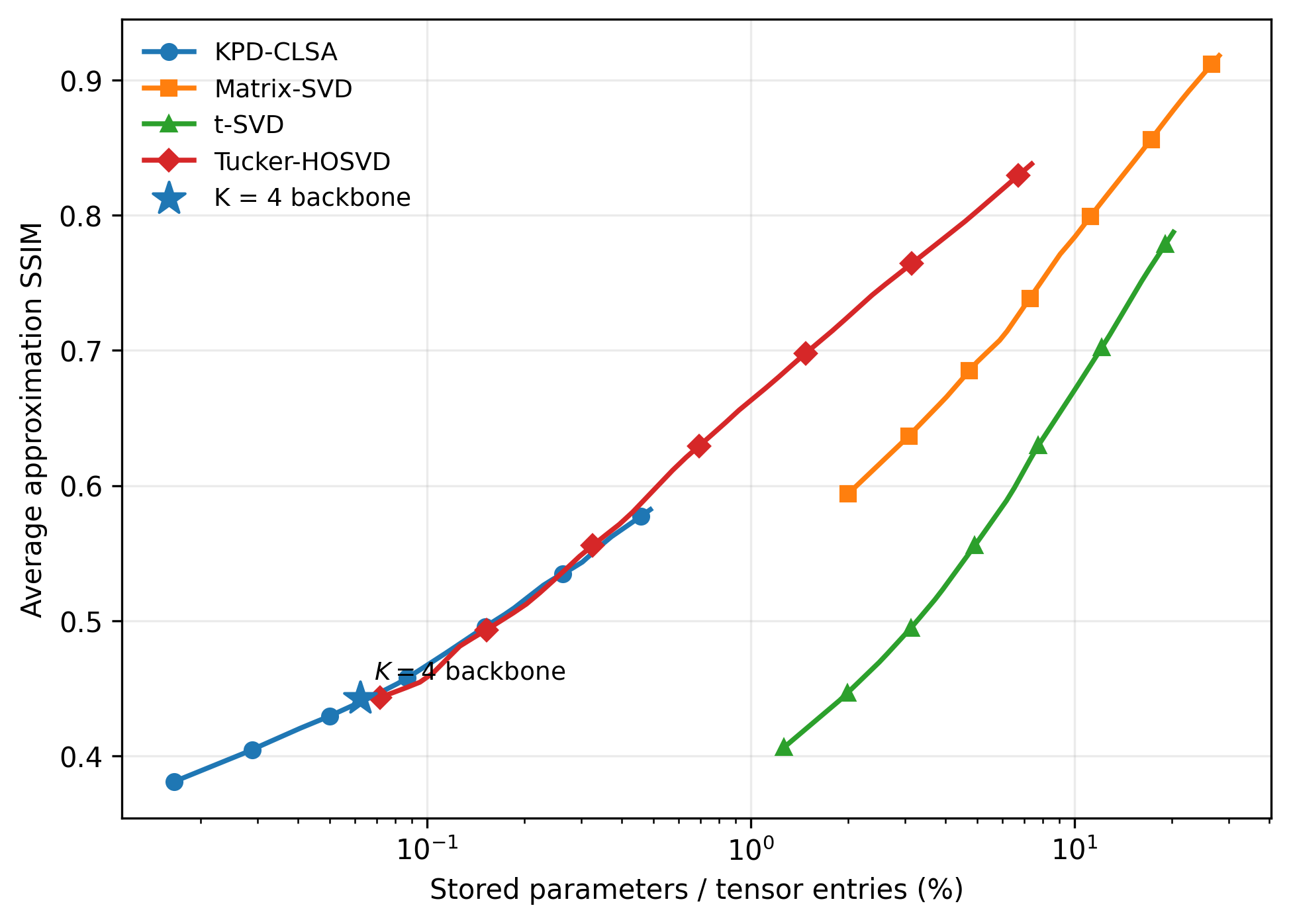}
}
\caption{Fidelity--storage tradeoff of four low-rank representation families
on the three development sequences (Akiyo, Foreman, and Mobile): (a) average
approximation PSNR and (b) average approximation SSIM versus the normalized
storage ratio in \eqref{eq:representation-budget}. For each method, the
sequence-level curves are averaged only over the storage interval shared by
all three videos. Stars mark the $K=4$ KPD-backbone operating point used in
the completion experiments.}
\label{fig:kpd-representation-efficiency}
\end{figure}

\section{Masked Kronecker Product Decomposition}
\label{sec:masked}

\begin{figure*}[!t]
\centering
\includegraphics[width=1.0\textwidth]{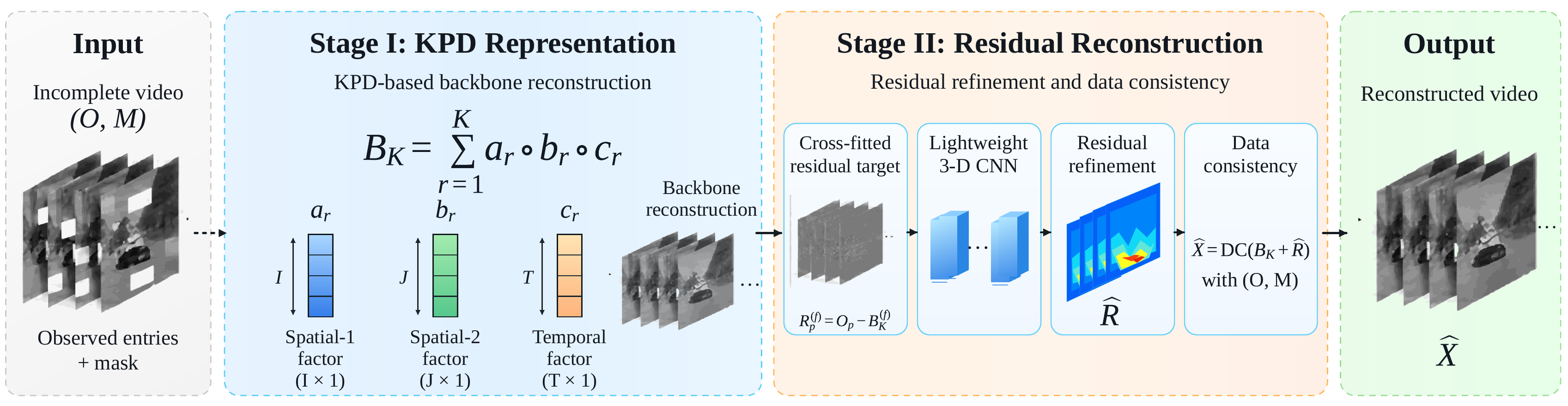}
\caption{Proposed structure--residual video-completion framework. Stage~I fits
a compact KPD backbone directly to the observed entries. Stage~II constructs
cross-fitted residual supervision, predicts the missing-tube correction with a
lightweight 3-D CNN, and applies exact data consistency to obtain
$\widehat{\X}$.}
\label{fig:framework-overview}
\end{figure*}

The workflow in Fig.~\ref{fig:framework-overview} starts with masked KPD
estimation, followed by cross-fitted residual learning. This section
derives the observed-entry objective, cyclic updates, and scale-balancing
step for backbone estimation; Section~\ref{sec:crossfit} develops the
residual predictor.

\subsection{Observed-entry objective}

We estimate the separable factors directly from observed entries. Cheng's
CLSA constructs a finite-sum KPD through component-wise least-squares
solutions of nearest-Kronecker-product subproblems \cite{cheng2026clsa}.
Here, all $K$ components are optimized under a single masked objective
and revisited in cyclic sweeps. Spatial and temporal smoothness penalties, together with Tikhonov
regularization, constrain the factors. Let $D_I$, $D_J$, and $D_T$ be the
first-order difference matrices along the corresponding modes, and set
\begin{equation}
    L_I=D_I^\top D_I,\quad
    L_J=D_J^\top D_J,\quad
    L_T=D_T^\top D_T.
\end{equation}

The masked KPD problem minimizes $\mathcal{J}(A,B,C)$ over
$\{\bm a_r,\bm b_r,\bm c_r\}_{r=1}^{K}$, where
\begin{align}
    \mathcal{J}(A,B,C)
    =&\frac{1}{2}
    \norm{
        \Mcal\had
        \left(
        \Ocal-
        \sum_{r=1}^{K}
        \bm a_r\od\bm b_r\od\bm c_r
        \right)
    }_F^2
    \nonumber\\
    &+\frac{\lambda_s}{2}
    \sum_{r=1}^{K}
    \left(
        \norm{D_I\bm a_r}_2^2
        +
        \norm{D_J\bm b_r}_2^2
    \right)
    \nonumber\\
    &+\frac{\lambda_t}{2}
    \sum_{r=1}^{K}
        \norm{D_T\bm c_r}_2^2
    \nonumber\\
    &+\frac{\lambda_0}{2}
    \sum_{r=1}^{K}
    \left(
        \norm{\bm a_r}_2^2+
        \norm{\bm b_r}_2^2+
        \norm{\bm c_r}_2^2
    \right).
    \label{eq:masked-objective}
\end{align}
The coefficients $\lambda_s\geq0$ and $\lambda_t\geq0$ penalize
spatial and temporal factor variations, and $\lambda_0>0$ imposes
Tikhonov regularization. Because the factorization admits equivalent rescalings,
these coefficients also affect the preferred factor scales.

These penalties act on factors rather than individual voxels. Since every
term in \eqref{eq:masked-objective} is nonnegative,
\begin{equation}
    \mathcal{J}(A,B,C)
    \geq
    \frac{\lambda_0}{2}
    \left(
        \norm{A}_F^2+\norm{B}_F^2+\norm{C}_F^2
    \right).
    \label{eq:coercive-bound}
\end{equation}
Therefore, $\lambda_0>0$ ensures bounded sublevel sets of $\mathcal{J}$
and prevents unbounded equivalent factor rescalings.

\subsection{Component-wise masked least-squares updates}

For the $r$th component, define the residual excluding that component as
\begin{equation}
    \Rcal^{(r)}
    =
    \Ocal
    -
    \sum_{\substack{q=1\\q\neq r}}^{K}
    \bm a_q\od\bm b_q\od\bm c_q.
    \label{eq:component-residual}
\end{equation}
With all other components fixed, the $r$th component is updated using
$\Rcal^{(r)}$.

\subsubsection{Update of the first spatial factor}

Fix $\bm b_r$ and $\bm c_r$. The subproblem for $\bm a_r$ is
\begin{align}
    \min_{\bm a_r}\quad
    &
    \frac{1}{2}
    \sum_{i,j,t}
    m_{ijt}
    \left(
        r_{ijt}^{(r)}
        -
        a_{ir}b_{jr}c_{tr}
    \right)^2
    \nonumber\\
    &+
    \frac{\lambda_s}{2}
    \norm{D_I\bm a_r}_2^2
    +
    \frac{\lambda_0}{2}
    \norm{\bm a_r}_2^2.
    \label{eq:a-subproblem}
\end{align}
Define
\begin{align}
    d^{(a)}_{i}
    &=
    \sum_{j=1}^{J}
    \sum_{t=1}^{T}
    m_{ijt}b_{jr}^2c_{tr}^2,
    \label{eq:da}
    \\
    g^{(a)}_{i}
    &=
    \sum_{j=1}^{J}
    \sum_{t=1}^{T}
    m_{ijt}r_{ijt}^{(r)}b_{jr}c_{tr}.
    \label{eq:ga}
\end{align}
Then the unique minimizer of \eqref{eq:a-subproblem} satisfies
\begin{equation}
    \left[
        \diag(\bm d^{(a)})
        +\lambda_s L_I
        +\lambda_0 I_I
    \right]
    \bm a_r
    =
    \bm g^{(a)}.
    \label{eq:a-update}
\end{equation}

\subsubsection{Update of the second spatial factor}

After updating $\bm a_r$, fix $\bm a_r$ and $\bm c_r$. Define
\begin{align}
    d^{(b)}_{j}
    &=
    \sum_{i=1}^{I}
    \sum_{t=1}^{T}
    m_{ijt}a_{ir}^2c_{tr}^2,
    \\
    g^{(b)}_{j}
    &=
    \sum_{i=1}^{I}
    \sum_{t=1}^{T}
    m_{ijt}r_{ijt}^{(r)}a_{ir}c_{tr}.
\end{align}
The corresponding update is
\begin{equation}
    \left[
        \diag(\bm d^{(b)})
        +\lambda_s L_J
        +\lambda_0 I_J
    \right]
    \bm b_r
    =
    \bm g^{(b)}.
    \label{eq:b-update}
\end{equation}

\subsubsection{Update of the temporal factor}

Finally, fix $\bm a_r$ and $\bm b_r$ and define
\begin{align}
    d^{(c)}_{t}
    &=
    \sum_{i=1}^{I}
    \sum_{j=1}^{J}
    m_{ijt}a_{ir}^2b_{jr}^2,
    \\
    g^{(c)}_{t}
    &=
    \sum_{i=1}^{I}
    \sum_{j=1}^{J}
    m_{ijt}r_{ijt}^{(r)}a_{ir}b_{jr}.
\end{align}
The temporal-factor update is
\begin{equation}
    \left[
        \diag(\bm d^{(c)})
        +\lambda_t L_T
        +\lambda_0 I_T
    \right]
    \bm c_r
    =
    \bm g^{(c)}.
    \label{eq:c-update}
\end{equation}

Equations~\eqref{eq:a-update}--\eqref{eq:c-update} constitute one
component-wise masked least-squares step. The normal equations use only
entries selected by $\Mcal$.

\subsection{Regularizer-optimal scale balancing}

For each rank-one component, consider the rescaling
\begin{equation}
    \bm a_r' = \alpha\bm a_r,\qquad
    \bm b_r' = \beta\bm b_r,\qquad
    \bm c_r' = \gamma\bm c_r,
    \qquad \alpha\beta\gamma=1,
    \label{eq:scale-orbit}
\end{equation}
which preserves the represented rank-one component. An arbitrary
normalization can change the penalty even when the data fit is unchanged.
We choose the equivalent scaling that minimizes the regularizer along this
orbit. Define
\begin{align}
    q_a &= \lambda_s\norm{D_I\bm a_r}_2^2
          +\lambda_0\norm{\bm a_r}_2^2,\\
    q_b &= \lambda_s\norm{D_J\bm b_r}_2^2
          +\lambda_0\norm{\bm b_r}_2^2,\\
    q_c &= \lambda_t\norm{D_T\bm c_r}_2^2
          +\lambda_0\norm{\bm c_r}_2^2.
\end{align}
For $q_a,q_b,q_c>0$, the constrained problem
\begin{equation}
    \min_{\alpha,\beta,\gamma>0}
    \frac12\left(q_a\alpha^2+q_b\beta^2+q_c\gamma^2\right)
    \quad\text{s.t.}\quad \alpha\beta\gamma=1
    \label{eq:balance-problem}
\end{equation}
has the closed-form solution
\begin{equation}
    g=(q_aq_bq_c)^{1/3},\qquad
    \alpha=\sqrt{\frac{g}{q_a}},\quad
    \beta=\sqrt{\frac{g}{q_b}},\quad
    \gamma=\sqrt{\frac{g}{q_c}}.
    \label{eq:balance-solution}
\end{equation}
The scaling in \eqref{eq:balance-solution} makes all three regularization
contributions equal to $g$. Since $(1,1,1)$ is feasible in
\eqref{eq:balance-problem}, balancing cannot increase the objective: the
data term remains unchanged and the penalty is minimized along the orbit.
This step is omitted if any $q$ is numerically zero.

\subsection{Cyclic masked KPD algorithm}

The algorithm cycles through the components and updates each mode. To
initialize the factors, missing entries are mean-filled and smoothed by
12 iterations of a $5\times5\times3$ uniform filter, with observed values
restored after every iteration. A KPD fit to the smoothed tensor provides
the initial factors. Subsequent sweeps minimize the masked objective
\eqref{eq:masked-objective}, with scale balancing after initialization
and each component update.

\begin{algorithm}[t]
\caption{Regularized Masked KPD by Cyclic Component Least Squares}
\label{alg:masked-kpd}
\begin{algorithmic}[1]
\Require Observation $\Ocal$, mask $\Mcal$, rank $K$, regularization
parameters $\lambda_s,\lambda_t,\lambda_0$, number of sweeps $S$.
\Ensure KPD factors $A,B,C$ and raw reconstruction $\widetilde{\X}_K$.
\State Initialize $A,B,C$.
\For{$s=1,\ldots,S$}
    \For{$r=1,\ldots,K$}
        \State Form $\Rcal^{(r)}$ by \eqref{eq:component-residual}.
        \State Update $\bm a_r$ by solving \eqref{eq:a-update}.
        \State Update $\bm b_r$ by solving \eqref{eq:b-update}.
        \State Update $\bm c_r$ by solving \eqref{eq:c-update}.
        \State Apply the regularizer-optimal scaling
        \eqref{eq:balance-solution}.
    \EndFor
\EndFor
\State $\widetilde{\X}_K\gets
\sum_{r=1}^{K}\bm a_r\od\bm b_r\od\bm c_r$.
\State \Return $A,B,C,\widetilde{\X}_K$.
\end{algorithmic}
\end{algorithm}

The result $\widetilde{\X}_K$ is the \emph{raw} reconstruction:
observed entries have not been replaced by their measurements. The fitting
residual $o_{ijt}-\widetilde{x}_{K,ijt}$ therefore remains available on
observed entries for the next stage.

\subsection{Monotonicity of the exact cyclic updates}

\begin{proposition}[Conditional optimality and monotonicity]
\label{prop:monotonicity}
Assume $\lambda_0>0$. For fixed values of all remaining factors, each
update \eqref{eq:a-update}--\eqref{eq:c-update} is the unique global minimizer
of \eqref{eq:masked-objective} with respect to the selected factor block.
Moreover, the optional scale-balancing step
\eqref{eq:balance-solution} preserves the represented rank-one component and
cannot increase the regularizer. Thus, the objective in Algorithm~\ref{alg:masked-kpd} does not increase
after an exact block or balancing step, and its sequence of values converges
to a finite limit.
\end{proposition}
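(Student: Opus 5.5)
The plan has three steps: show each block update is the exact minimizer of a strongly convex quadratic, show the balancing step leaves the data term fixed and cannot raise the penalty, and conclude by monotonicity plus a lower bound.

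\textbf{Block updates.} Fix every factor except $\bm a_r$. The data term of \eqref{eq:masked-objective} depends on $\bm a_r$ only through $\Rcal^{(r)}$ in \eqref{eq:component-residual}, so $\mathcal{J}$ restricted to $\bm a_r$ equals \eqref{eq:a-subproblem} plus a constant. Expanding the squares gives a quadratic
\[
\tfrac12\bm a_r^\top H_a\bm a_r-\bm a_r^\top\bm g^{(a)}+\text{const},
\qquad
H_a=\diag(\bm d^{(a)})+\lambda_s L_I+\lambda_0 I_I .
\]
Here $d^{(a)}_i\ge0$ because it is a sum of $m_{ijt}b_{jr}^2c_{tr}^2\ge0$. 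Also $L_I=D_I^\top D_I\succeq0$ and $\lambda_s\ge0$. Hence $H_a\succeq\lambda_0 I_I\succ0$ whenever $\lambda_0>0$. The function is therefore strongly convex, its stationarity condition is exactly \eqref{eq:a-update}, and that equation has a unique solution, which is the unique global minimizer. The same argument gives \eqref{eq:b-update} and \eqref{eq:c-update}, using $L_J$ and $L_T$ together with $\lambda_t\ge0$. Consequently each exact block step satisfies $\mathcal{J}_{\text{new}}\le\mathcal{J}_{\text{old}}$, because the previous block value is feasible for the subproblem.

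\textbf{Balancing.} Under \eqref{eq:scale-orbit} we have $(\alpha\bm a_r)\od(\beta\bm b_r)\od(\gamma\bm c_r)=\alpha\beta\gamma\,\bm a_r\od\bm b_r\od\bm c_r$. The represented component and the data term are therefore unchanged. The penalty for component $r$ is $\tfrac12(q_a\alpha^2+q_b\beta^2+q_c\gamma^2)$, and the other components are untouched.

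Next I verify that \eqref{eq:balance-solution} solves \eqref{eq:balance-problem}. Feasibility holds because $\alpha\beta\gamma=g^{3/2}/(q_aq_bq_c)^{1/2}=1$. Optimality follows from AM--GM:
\[
\tfrac13(q_a\alpha^2+q_b\beta^2+q_c\gamma^2)\ge (q_aq_bq_c)^{1/3}(\alpha\beta\gamma)^{2/3}=g,
\]
with equality exactly when $q_a\alpha^2=q_b\beta^2=q_c\gamma^2=g$. Since $(1,1,1)$ is feasible, the balanced penalty is at most the original one. When some $q$ is zero the step is skipped, which trivially preserves the objective value. This AM--GM step is the only part that needs more than routine algebra, and it is short.

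\textbf{Convergence of values.} Algorithm~\ref{alg:masked-kpd} is a composition of exact block steps and balancing steps, so the sequence of objective values is nonincreasing. Every term of \eqref{eq:masked-objective} is nonnegative, so by \eqref{eq:coercive-bound} the sequence is bounded below by $0$. A monotone sequence bounded below converges to a finite limit.

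The main obstacle is only bookkeeping. The data term must be rewritten through $\Rcal^{(r)}$, and positive definiteness must come from $\lambda_0>0$ alone, because $\bm d^{(a)}$ can vanish at rows with no observations. Note that the claim concerns objective values only, not convergence of the iterates.
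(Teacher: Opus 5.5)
Your proof is correct and follows the same route as the paper's. Each block subproblem is a quadratic with Hessian $H_a\succeq\lambda_0 I_I\succ0$, so it has a unique global minimizer; the data term is invariant along the scale orbit with $(1,1,1)$ feasible; and the nonnegative objective values converge because they are monotone and bounded below. Your AM--GM check that \eqref{eq:balance-solution} actually solves \eqref{eq:balance-problem}, and your explicit reduction of $\mathcal{J}$ to the block subproblem through $\Rcal^{(r)}$, fill in steps the paper only asserts.
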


\begin{proof}
Consider the $\bm a_r$ subproblem. Expanding
\eqref{eq:a-subproblem} and discarding terms independent of $\bm a_r$ yields
\begin{equation}
    \phi_a(\bm a_r)
    =
    \frac12
    \bm a_r^\top
    H_a
    \bm a_r
    -
    (\bm g^{(a)})^\top\bm a_r
    +\mathrm{const},
\end{equation}
where
\begin{equation}
    H_a
    =
    \diag(\bm d^{(a)})
    +\lambda_sL_I
    +\lambda_0 I_I.
\end{equation}
Since $\diag(\bm d^{(a)})\succeq0$, $L_I\succeq0$, and $\lambda_0>0$,
we have $H_a\succ0$. Hence $\phi_a$ is strictly convex and its unique global
minimizer is characterized by $\nabla\phi_a=0$, which gives
\eqref{eq:a-update}. The same argument applies to $\bm b_r$ and $\bm c_r$.
Each exact block replacement therefore cannot increase the full objective.
For the scale-balancing step, \eqref{eq:scale-orbit} leaves the data term
unchanged, while \eqref{eq:balance-solution} solves
\eqref{eq:balance-problem}; hence this step also cannot increase
$\mathcal{J}$. Since $\mathcal{J}\geq0$, the sequence of objective values is
monotone nonincreasing and bounded below, and thus converges.
\end{proof}

\begin{remark}
Although each factor subproblem is strictly convex, the joint objective
is nonconvex in $(A,B,C)$. Proposition~\ref{prop:monotonicity}
guarantees convergence of objective values; it does not establish global
optimality of the factor estimates. The iterates remain in a bounded
sublevel set by \eqref{eq:coercive-bound}, and their limiting estimates
may depend on initialization.
\end{remark}

\section{Cross-Fitted Self-Supervised Residual Learning}
\label{sec:crossfit}

Stage~II trains the residual predictor using observed tubes withheld from
fold-specific KPD fits. The resulting prediction errors serve as self-supervised
targets for missing-tube correction.

\subsection{Residual modeling and target mismatch}

A finite rank $K$ provides a compact representation of the structural backbone.
Let
\begin{equation}
    \widetilde{\X}_K
    =
    \sum_{r=1}^{K}
    \bm a_r\od\bm b_r\od\bm c_r
\end{equation}
be the raw KPD reconstruction. For normalized intensity data, define the
bounded structural backbone
\begin{equation}
    \Bcal_K
    =
    \Pi_{[0,1]}(\widetilde{\X}_K),
    \qquad
    \Pi_{[0,1]}(x)=\min\{1,\max\{0,x\}\},
    \label{eq:bounded-backbone}
\end{equation}
where the projection is applied elementwise. Residual learning is performed
with $\Bcal_K$ rather than an observation-clamped reconstruction. Define
\begin{equation}
    \Rcal_\star := \X-\Bcal_K,
    \qquad
    \X=\Bcal_K+\Rcal_\star.
    \label{eq:true-residual}
\end{equation}
Equation~\eqref{eq:true-residual} defines the residual to be predicted
from the backbone and the residual measurements available on observed tubes.

At an observed location, the residual can be evaluated as
\begin{equation}
    \Rcal_\Omega
    =
    \Pcal_\Omega
    \left(
        \Ocal-\Bcal_K
    \right).
    \label{eq:observed-residual}
\end{equation}
Residuals computed at tubes used to fit $\Bcal_K$ are in-sample fitting
errors. The correction needed at missing tubes is the backbone's
out-of-sample prediction error. We use cross-fitting to construct training targets for the latter
type of error. Following sample splitting
\cite{chernozhukov2018dml}, each measured target tube is withheld from the
KPD fit used to predict it.

\subsection{Tube-wise cross-fitting}

Let $\Gamma_{\mathrm{tr}}\subseteq\Gamma_\Omega$ denote the observed tubes
assigned to self-supervised training. During development, a strict internal
validation split may be formed as
\begin{equation}
    \Gamma_\Omega
    =
    \Gamma_{\mathrm{tr}}
    \,\dot\cup\,
    \Gamma_{\mathrm{val}},
    \label{eq:train-val}
\end{equation}
where $\dot\cup$ denotes disjoint union. Validation tubes are excluded
from the fold-specific KPD fits and residual-network training. Once the
configuration is selected, the final model uses all observed tubes for
cross-fitting.

Partition $\Gamma_{\mathrm{tr}}$ into $F$ disjoint folds:
\begin{equation}
    \Gamma_{\mathrm{tr}}
    =
    \dot\bigcup_{f=1}^{F}
    \Gamma_p^{(f)},
    \label{eq:fold-partition}
\end{equation}
where $\Gamma_p^{(f)}$ is the pseudo-missing set in fold $f$. Define the
corresponding context set
\begin{equation}
    \Gamma_c^{(f)}
    =
    \Gamma_{\mathrm{tr}}
    \setminus
    \Gamma_p^{(f)}.
    \label{eq:context-fold}
\end{equation}

For each fold, fit a KPD model using only its context tubes:
\begin{equation}
    \ThetaK^{(f)}
    =
    \argmin_{\ThetaK}
    \mathcal{J}_{\Gamma_c^{(f)}}(\ThetaK),
    \label{eq:fold-kpd}
\end{equation}
where $\mathcal{J}_{\Gamma_c^{(f)}}$ denotes
\eqref{eq:masked-objective} with a tube-wise mask supported only on
$\Gamma_c^{(f)}$. Let
\begin{equation}
    \widetilde{\X}_K^{(f)}
    =
    \mathcal{KPD}
    \left(
        \Ocal_{\Gamma_c^{(f)}}
    \right)
\end{equation}
denote its raw reconstruction, and define the fold-specific bounded backbone
\begin{equation}
    \Bcal_K^{(f)}
    =
    \Pi_{[0,1]}\!\left(\widetilde{\X}_K^{(f)}\right).
    \label{eq:fold-bounded-backbone}
\end{equation}

The visible context residual is
\begin{equation}
    \Ecal_c^{(f)}
    =
    \Pcal_{\Gamma_c^{(f)}}
    \left(
        \Ocal-\Bcal_K^{(f)}
    \right),
    \label{eq:context-residual}
\end{equation}
where the tube projection includes all $t=1,\ldots,T$.
The cross-fitted pseudo-missing target is
\begin{equation}
    \Rcal_p^{(f)}
    =
    \Pcal_{\Gamma_p^{(f)}}
    \left(
        \Ocal-\Bcal_K^{(f)}
    \right).
    \label{eq:crossfit-target}
\end{equation}

The tubes in $\Gamma_p^{(f)}$ belong to the original observation set
but are excluded from the fold-specific KPD fit. Equation~\eqref{eq:crossfit-target}
therefore provides measured held-out prediction errors as training targets.

\subsection{No-leakage property of the cross-fitted target}

\begin{proposition}[Cross-fitted target is a KPD hold-out error]
\label{prop:no-leakage}
For any fold $f$, suppose all data-dependent operations in the fold-specific
KPD pipeline---including preprocessing, initialization, and any adaptive
stopping or parameter selection---use only measurements in
$\Gamma_c^{(f)}$. No value in $\Gamma_p^{(f)}$ then enters the
construction of $\widetilde{\X}_K^{(f)}$ or its bounded projection
$\Bcal_K^{(f)}$. The target $\Rcal_p^{(f)}$ in
\eqref{eq:crossfit-target} is therefore an out-of-sample prediction error
of the fold-specific backbone.
\end{proposition}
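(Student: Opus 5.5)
The plan is to show that the fold-specific backbone is a function of the context measurements only. Concretely, we will prove that $\Bcal_K^{(f)}$ is a deterministic function of $\Pcal_{\Gamma_c^{(f)}}(\Ocal)$, together with the known mask, the rank $K$, and the fixed hyperparameters. Once that is established, changing any value in $\Gamma_p^{(f)}$ leaves $\Bcal_K^{(f)}$ unchanged. It then follows that $\Rcal_p^{(f)}=\Pcal_{\Gamma_p^{(f)}}(\Ocal)-\Pcal_{\Gamma_p^{(f)}}(\Bcal_K^{(f)})$ is the difference between held-out measurements and a prediction computed without them, which is exactly an out-of-sample prediction error.

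The argument goes stage by stage through the pipeline, following Algorithm~\ref{alg:masked-kpd} with the fold mask $\Mcal^{(f)}$ supported on $\Gamma_c^{(f)}$ for all $t$. The point is that each quantity computed depends on $\Ocal$ only through $\Mcal^{(f)}\had\Ocal$.

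\emph{Initialization.} Mean-filling, uniform smoothing, and restoring observed values use only the masked entries, provided they are computed with $\Mcal^{(f)}$. The subsequent unmasked KPD fit then acts on a tensor determined by $\Mcal^{(f)}\had\Ocal$. This is guaranteed by the hypothesis on preprocessing and initialization.

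\emph{Component updates.} The quantities $d^{(a)}_i,d^{(b)}_j,d^{(c)}_t$ in \eqref{eq:da} and its analogues involve only the mask and the current factors. The quantities $g^{(a)}_i,g^{(b)}_j,g^{(c)}_t$ involve $m^{(f)}_{ijt}r^{(r)}_{ijt}$. Since $m^{(f)}_{ijt}=0$ for $(i,j)\notin\Gamma_c^{(f)}$, these terms depend only on context entries of $\Ocal$ and on the current factors. By Proposition~\ref{prop:monotonicity}, each system \eqref{eq:a-update}--\eqref{eq:c-update} has a unique solution because $\lambda_0>0$. The update map is therefore a well-defined function of its inputs, with no hidden dependence on $\Gamma_p^{(f)}$.

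\emph{Balancing.} The step \eqref{eq:balance-solution} depends only on the factors and the fixed $\lambda$'s.

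An induction over sweeps $s$ and components $r$ then shows that every iterate, and hence the final $A,B,C$, is a function of context data alone. The stopping rule and any parameter selection are covered by the hypothesis. The clipping $\Pi_{[0,1]}$ is elementwise and data-free, so $\Bcal_K^{(f)}$ inherits the same property. The same holds for $\Ecal_c^{(f)}$.

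The main obstacle is not computational but definitional. The objective \eqref{eq:masked-objective} is written with $\Ocal$ and $\Rcal^{(r)}$, which formally contain held-out values. We must check that these values are always multiplied by the zero mask and never enter through unmasked channels such as the initialization filter or a convergence criterion. I would handle this by stating explicitly that the pipeline receives $\Ocal_{\Gamma_c^{(f)}}=\Pcal_{\Gamma_c^{(f)}}(\Ocal)$ as input. Then invariance is immediate: replacing $\Ocal$ by any $\Ocal'$ that agrees on $\Gamma_c^{(f)}$ yields an identical run. The hypothesis of the proposition is precisely what closes the unmasked channels, so the proof reduces to invoking it at each stage.
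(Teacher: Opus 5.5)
Your proof is correct and follows the same logic as the paper: the fold-specific backbone is a function of the context measurements alone, so values in $\Gamma_p^{(f)}$ enter $\Rcal_p^{(f)}$ only as targets after fitting. The paper's proof is more compressed. It cites $\Gamma_c^{(f)}\cap\Gamma_p^{(f)}=\varnothing$ from \eqref{eq:context-fold}, which your invariance step uses implicitly, and applies the hypothesis to the whole estimator at once; your stage-by-stage induction also shows that the masked updates, balancing, and clipping satisfy the property on their own, so the hypothesis is needed only for initialization, preprocessing, and stopping.
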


\begin{proof}
Equation~\eqref{eq:context-fold} gives
$\Gamma_c^{(f)}\cap\Gamma_p^{(f)}=\varnothing$. By assumption, the
fold-specific estimator depends only on measurements in
$\Gamma_c^{(f)}$. The raw estimate $\widetilde{\X}_K^{(f)}$ and its
bounded version $\Bcal_K^{(f)}$ therefore do not depend on values in
$\Gamma_p^{(f)}$. Those values are used only after fitting, to evaluate
$\Ocal-\Bcal_K^{(f)}$ at held-out tubes, proving the claim.
\end{proof}

\begin{remark}[Relation to genuinely missing tubes]
Under missing-completely-at-random tube sampling, the fold assignment is
independent of video values and spatial content. The pseudo-missing tubes
therefore provide a held-out prediction task analogous to completion at
unobserved locations. Fold-specific and final prediction errors need not
have identical distributions, since their backbones use different numbers
of context tubes. Reserving a fraction $\nu$ of observed tubes for internal
validation leaves approximately
$(1-\nu)(1-1/F)|\Gamma_\Omega|$ context tubes per fold, compared with
$|\Gamma_\Omega|$ for the final fit. After configuration selection, we
set $\Gamma_{\mathrm{tr}}=\Gamma_\Omega$, reducing this difference to
the fold fraction $1-1/F$.
\end{remark}

\subsection{Residual network input}

For fold $f$, define the four-channel input tensor
\begin{equation}
    \Zcal^{(f)}
    =
    \left[
        \Bcal_K^{(f)},
        \Ecal_c^{(f)},
        \Mcal_c^{(f)},
        \Hcal_s\!\left(\Bcal_K^{(f)}\right)
    \right],
    \label{eq:network-input}
\end{equation}
where $\Mcal_c^{(f)}$ is the context mask and $\Hcal_s$ is a spatial
high-pass operator. In our implementation,
\begin{equation}
    \Hcal_s(\X)
    =
    \X-\operatorname{AvgPool}_{1\times5\times5}(\X).
    \label{eq:highpass}
\end{equation}
This operator extracts local spatial variation without temporal pooling.

The four channels have distinct roles:
\begin{enumerate}
    \item $\Bcal_K^{(f)}$ supplies the global structured estimate;
    \item $\Ecal_c^{(f)}$ supplies observed residual samples;
    \item $\Mcal_c^{(f)}$ distinguishes available residual samples from missing ones;
    \item $\Hcal_s(\Bcal_K^{(f)})$ highlights local spatial variation that is
    attenuated in the low-rank backbone.
\end{enumerate}

\subsection{Lightweight 3-D residual predictor}

Let $f_\theta$ denote a compact 3-D convolutional network. For an input
$\Zcal$, the forward map is
\begin{align}
    U_1
    &=
    \operatorname{GELU}
    \left(
        \operatorname{Conv3D}_{3\times3\times3}^{4\rightarrow h}(\Zcal)
    \right),
    \\
    U_2
    &=
    \operatorname{GELU}
    \left(
        \operatorname{Conv3D}_{3\times3\times3,d=2}^{h\rightarrow h}(U_1)
    \right),
    \\
    U_3
    &=
    \operatorname{GELU}
    \left(
        \operatorname{Conv3D}_{3\times3\times3}^{h\rightarrow h}(U_2)
    \right),
    \\
    \widehat{\Rcal}
    &=
    r_{\max}
    \tanh
    \left(
        \operatorname{Conv3D}_{3\times3\times3}^{h\rightarrow1}(U_3)
    \right).
    \label{eq:network}
\end{align}
Here, $d=2$ is the dilation factor, $h$ is the number of hidden
channels, and $r_{\max}>0$ bounds the residual amplitude.

The weights and bias of the final convolution are initialized to zero. Consequently,
before training,
\begin{equation}
    f_{\theta_0}(\Zcal)=0,
\end{equation}
and the network initially leaves the bounded KPD backbone unchanged.

The predictor is trained separately for each video using its observed tubes
and cross-fitted targets.

\subsection{Cross-fitted self-supervised loss}

For fold $f$, the target-dependent fitting term is computed only over the
pseudo-missing tubes $\Gamma_p^{(f)}$. Let $\rho_\delta$ denote the
smooth-$L_1$ penalty
\begin{equation}
    \rho_\delta(e)
    =
    \begin{cases}
        \dfrac{e^2}{2\delta}, & |e|<\delta,\\[4pt]
        |e|-\dfrac{\delta}{2}, & |e|\geq\delta.
    \end{cases}
    \label{eq:smoothl1}
\end{equation}
To assign greater weight to larger residuals while keeping the weights bounded, define
\begin{equation}
    w_{ijt}^{(f)}
    =
    1+
    \eta
    \min
    \left\{
        \frac{
        |r_{p,ijt}^{(f)}|
        }{\tau},
        \kappa
    \right\},
    \label{eq:detail-weight}
\end{equation}
with $\eta,\tau,\kappa>0$.

The weighted loss at a pseudo-missing target voxel is
\begin{equation}
    \ell_{ijt}^{(f)}(\theta)
    =
    w_{ijt}^{(f)}
    \rho_\delta\!\left(
        [f_\theta(\Zcal^{(f)})]_{ijt}
        - r_{p,ijt}^{(f)}
    \right).
    \label{eq:pointwise-crossfit-loss}
\end{equation}
The cross-fitted self-supervised objective is
\begin{equation}
\begin{aligned}
    \min_\theta\quad
    \mathcal{L}_{\mathrm{CF}}(\theta)
    &
    \\
    =\frac{1}{F}
    \sum_{f=1}^{F}
    \frac{1}{|\Omega_p^{(f)}|}
    &\sum_{(i,j,t)\in\Omega_p^{(f)}}
    \ell_{ijt}^{(f)}(\theta)
    +
    \frac{\mu}{F}
    \sum_{f=1}^{F}
    \frac{\norm{f_\theta(\Zcal^{(f)})}_F^2}{IJT}.
\end{aligned}
    \label{eq:crossfit-loss}
\end{equation}
Here, $\Omega_p^{(f)}$ contains the voxels in the pseudo-missing tubes of
fold $f$, and $\mu$ controls a weak residual-magnitude penalty.

At each training step, a fold and a spatiotemporal patch
containing its pseudo-missing tubes are sampled. The input tensor combines the
fold-specific backbone, observed residuals on context tubes, the context mask, and
a spatial high-pass feature computed from the backbone. The fitting term is computed
only over the held-out tubes in the patch, whereas the magnitude penalty is evaluated
over the entire predicted patch.
Held-out measurements enter only the loss targets, not the network inputs.

\subsection{Final inference and data consistency}

After residual-network training, a final raw KPD model is fitted on all
observed entries $\Omega$:
\begin{equation}
    \widetilde{\X}_K
    =
    \mathcal{KPD}(\Ocal,\Mcal),
    \qquad
    \Bcal_K=\Pi_{[0,1]}(\widetilde{\X}_K).
    \label{eq:final-kpd}
\end{equation}
The visible residual field is
\begin{equation}
    \Ecal_\Omega
    =
    \Pcal_\Omega
    \left(
        \Ocal-\Bcal_K
    \right),
    \label{eq:final-visible-residual}
\end{equation}
and the final network input is
\begin{equation}
    \Zcal
    =
    \left[
        \Bcal_K,
        \Ecal_\Omega,
        \Mcal,
        \Hcal_s(\Bcal_K)
    \right].
\end{equation}
The unconstrained refined estimate is
\begin{equation}
    \X_{\mathrm{ref}}
    =
    \Bcal_K
    +
    f_\theta(\Zcal).
    \label{eq:refined}
\end{equation}
The data-consistent output is
\begin{equation}
    \widehat{\X}
    =
    \Mcal\had\Ocal
    +
    (\bm 1-\Mcal)
    \had
    \Pi_{[0,1]}
    \left(
        \X_{\mathrm{ref}}
    \right).
    \label{eq:data-consistency}
\end{equation}

\begin{proposition}[Exact data consistency]
\label{prop:data-consistency}
The output of \eqref{eq:data-consistency} satisfies
\begin{equation}
    \Pcal_\Omega(\widehat{\X})
    =
    \Pcal_\Omega(\Ocal).
\end{equation}
Hence the final output retains every observed entry exactly.
\end{proposition}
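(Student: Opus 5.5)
The plan is an entrywise verification of \eqref{eq:data-consistency}, splitting into the cases $(i,j,t)\in\Omega$ and $(i,j,t)\notin\Omega$. Both sides of the claimed identity are defined through the projection $\Pcal_\Omega$, so it suffices to show that $\hat x_{ijt}=o_{ijt}$ for every observed index. At unobserved indices both $[\Pcal_\Omega(\widehat{\X})]_{ijt}$ and $[\Pcal_\Omega(\Ocal)]_{ijt}$ equal zero by definition of the projection, so nothing needs to be checked there.

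Fix $(i,j,t)\in\Omega$, so that $m_{ijt}=1$ by the definition of $\Omega$ from the mask. Reading \eqref{eq:data-consistency} entrywise, with $\bm 1$ the all-ones tensor, gives
\begin{equation*}
    \hat x_{ijt}
    = m_{ijt}\,o_{ijt}
    + (1-m_{ijt})\,\bigl[\Pi_{[0,1]}(\X_{\mathrm{ref}})\bigr]_{ijt}
    = o_{ijt}.
\end{equation*}
The second term vanishes because the factor $1-m_{ijt}$ is zero. This requires only that the clipped refined value is a finite real number. That holds because $\Pi_{[0,1]}$ maps every real number into $[0,1]$, and $\X_{\mathrm{ref}}=\Bcal_K+f_\theta(\Zcal)$ is finite, since $\Bcal_K\in[0,1]$ and $|f_\theta|\le r_{\max}$ from the $\tanh$ output layer in \eqref{eq:network}. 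Applying $\Pcal_\Omega$ to both tensors therefore yields the identity.

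The argument has no real obstacle. The one point I would state explicitly is that the result does not depend on any property of the KPD fit, the training of $f_\theta$, or the mask being tube-wise. It uses only that $\Mcal$ is binary and that $\Omega$ is exactly its support. The tube-wise structure \eqref{eq:tube-mask} is a special case of this. The final sentence of the statement, that every observed entry is retained exactly, is then a restatement of the entrywise equality $\hat x_{ijt}=o_{ijt}=x_{ijt}$ on $\Omega$, using \eqref{eq:observation}.
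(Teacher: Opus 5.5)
Your proposal is correct and follows the paper's proof: fix $(i,j,t)\in\Omega$, use $m_{ijt}=1$ to read $\widehat{x}_{ijt}=o_{ijt}$ off \eqref{eq:data-consistency}, and note that unobserved indices are zeroed by $\Pcal_\Omega$ on both sides. Your additional remarks (finiteness of the clipped term, independence from the KPD fit and tube-wise structure, and $o_{ijt}=x_{ijt}$ on $\Omega$) are valid but not needed for the argument.
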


\begin{proof}
For $(i,j,t)\in\Omega$, we have $m_{ijt}=1$, so
\eqref{eq:data-consistency} yields $\widehat{x}_{ijt}=o_{ijt}$.
This holds for every observed entry.
\end{proof}

\subsection{Training and inference}

\begin{algorithm}[t]
\caption{Cross-fitted self-supervised residual completion}
\label{alg:crossfit}
\begin{algorithmic}[1]
\Require Observation $\Ocal$, tube mask $\Mcal$, KPD rank $K$, number of
cross-fitting folds $F$.
\Ensure Reconstructed video $\widehat{\X}$.
\State Obtain the observed tube set $\Gamma_\Omega$.
\State Optionally split
$\Gamma_\Omega=\Gamma_{\mathrm{tr}}\dot\cup\Gamma_{\mathrm{val}}$.
\State Partition
$\Gamma_{\mathrm{tr}}=
\dot\bigcup_{f=1}^{F}\Gamma_p^{(f)}$.
\For{$f=1,\ldots,F$}
    \State $\Gamma_c^{(f)}
    \gets
    \Gamma_{\mathrm{tr}}\setminus\Gamma_p^{(f)}$.
    \State Fit $\widetilde{\X}_K^{(f)}$ using
    Algorithm~\ref{alg:masked-kpd} on $\Gamma_c^{(f)}$ only, with every
    data-dependent initialization step restricted to the same context.
    \State Set $\Bcal_K^{(f)}\gets\Pi_{[0,1]}(\widetilde{\X}_K^{(f)})$.
    \State Construct context residual $\Ecal_c^{(f)}$ by
    \eqref{eq:context-residual}.
    \State Construct pseudo-missing target $\Rcal_p^{(f)}$ by
    \eqref{eq:crossfit-target}.
    \State Form $\Zcal^{(f)}$ by \eqref{eq:network-input}.
\EndFor
\State Train $f_\theta$ using \eqref{eq:crossfit-loss}.
\State Fit the final raw KPD reconstruction $\widetilde{\X}_K$ on all observed tubes and set $\Bcal_K\gets\Pi_{[0,1]}(\widetilde{\X}_K)$.
\State Form $\Ecal_\Omega$ using \eqref{eq:final-visible-residual}.
\State $\widehat{\Rcal}\gets
f_\theta([\Bcal_K,\Ecal_\Omega,\Mcal,
\Hcal_s(\Bcal_K)])$.
\State Apply data consistency by \eqref{eq:data-consistency}.
\State \Return $\widehat{\X}$.
\end{algorithmic}
\end{algorithm}

The full estimator combines the structural backbone with the residual
prediction before applying data consistency. Denoting the data-consistency operator by
$\operatorname{DC}(\cdot)$ gives
\begin{equation}
    \widehat{\X}
    =
    \operatorname{DC}\!\left(
        \Bcal_K + f_\theta(\Zcal)
    \right).
    \label{eq:framework-box}
\end{equation}
The backbone models low-dimensional spatiotemporal structure, and the
network predicts a correction from the observed residual context.

\subsection{Computational complexity}
\label{sec:complexity}

Let $N=IJT$ and $N_\Omega=|\Omega|$ be the total and observed voxel
counts. Contractions in \eqref{eq:da}--\eqref{eq:c-update} require
$\mathcal{O}(N_\Omega)$ operations per component when computed over
observed entries. With banded solvers, the three positive-definite
tridiagonal systems require $\mathcal{O}(I+J+T)$ operations per component.
An observation-indexed implementation of $S$ sweeps therefore costs
\begin{equation}
    \mathcal{O}\!\left(SK(N_\Omega+I+J+T)\right).
    \label{eq:kpd-complexity}
\end{equation}

Our implementation stores dense $I\times J\times T$ masks and residual
arrays and evaluates full-tensor contractions. For $S$ sweeps, its cost is
\begin{equation}
    \mathcal{O}\!\left(SK(N+I+J+T)\right),
    \label{eq:kpd-dense-complexity}
\end{equation}
which reduces to $\mathcal{O}(SKN)$ when $N$ dominates the factor dimensions.
These complexity bounds exclude warm-start initialization. The implementation employs
banded tridiagonal solvers; replacing them with dense linear solvers can add
$\mathcal{O}(SK(I^3+J^3+T^3))$ operations in the worst case.

Cross-fitting requires $F$ auxiliary KPD fits with different context masks.
Their total cyclic-refinement cost under dense contractions is
$\mathcal{O}(FSK(N+I+J+T))$, followed by a final fit on all observed
tubes. The auxiliary fits incur approximately $F$ times the cost of a single KPD
fit. For fixed $3\times3\times3$ kernels and hidden width $h$, one
residual-network forward pass costs $\mathcal{O}(Nh^2)$.
The CNN training cost depends on the number of optimization steps,
batch size, and patch volume. The experiments report end-to-end runtimes.


\section{Experiments}
\label{sec:experiments}

\subsection{Protocol and implementation}

We evaluate the proposed method on 23 grayscale video sequences
and eight selected DAVIS 2017 RGB sequences
\cite{ponttuset2017davis}. The grayscale benchmark contains
17 videos of size $144\times176\times100$ and six videos of
size $288\times352\times50$, with sampling rates
$\mathrm{SR}\in\{0.1,0.2,0.3,0.4,0.5\}$. The RGB benchmark
includes Bear, Drift-chicane, Cows, Dog, Camel, Kite-surf,
Parkour, and Breakdance, selected to cover different visual
content and motion patterns. Each RGB sequence contains
50 frames resized to $360\times640$ pixels, with
$\mathrm{SR}\in\{0.2,0.3,0.4\}$. Both benchmarks use
mode-3 tube-wise missingness, with the
same spatial sampling mask applied across all frames.
For RGB videos, the mask is also shared across color
channels, which are reconstructed separately.

The grayscale configuration is selected using Akiyo,
Foreman, and Mobile at $\mathrm{SR}=0.2$ with an 80/20
split of their observed tubes for internal training and
validation. Validation MAE guides configuration selection;
the selected settings are then used for the 23-video benchmark.
For RGB experiments, Blackswan at $\mathrm{SR}=0.3$
serves as the development sequence. Its missing-region PSNR,
evaluated against the available reference, determines the
configuration applied unchanged to the eight reported
sequences and three sampling rates. These
settings specify only the hyperparameters. Each video's structural
backbone and residual CNN are fitted from its own observed
tubes. Table~\ref{tab:reference-config} lists the configurations, and
Supplementary Appendix~A details the RGB selection study.
\begin{table}[!ht]
\centering
\caption{Fixed hyperparameter configurations for
the grayscale and DAVIS RGB benchmarks.
Parameters marked as ``same'' use the grayscale setting.}
\label{tab:reference-config}
\footnotesize
\setlength{\tabcolsep}{4pt}
\begin{tabular}{lcc}
\toprule
Parameter & 23-video grayscale & DAVIS 360p RGB\\
\midrule
KPD rank $K$ & $4$ & $12$\\
Masked KPD sweeps $S$ & $12$ & same\\
Spatial penalty $\lambda_s$ & $5\times10^{-3}$ & same\\
Temporal penalty $\lambda_t$ & $5\times10^{-2}$ & same\\
Tikhonov $\lambda_0$ & $10^{-5}$ & same\\
Cross-fitting folds $F$ & $5$ & same\\
Conv3D kernel $(T,H,W)$ & $3\times3\times3$ & same\\
Hidden channels $h$ & $12$ & $32$\\
Residual bound $r_{\max}$ & $0.25$ & same\\
Training patch $(T,H,W)$ & $16\times48\times48$ & $16\times96\times96$\\
Batch size & $2$ & same\\
Optimization steps & $1200$ & $3600$\\
Learning rate & $10^{-3}$ & same\\
Smooth-$L_1$ threshold $\delta$ & $0.02$ & same\\
Magnitude weight $\eta$ & $2$ & same\\
Magnitude scale $\tau$ & $0.05$ & same\\
Magnitude cap $\kappa$ & $3$ & same\\
Residual penalty $\mu$ & $10^{-4}$ & same\\
\bottomrule
\end{tabular}
\end{table}

The residual CNN is trained using Adam with a learning
rate of $10^{-3}$. Automatic mixed precision is used
on CUDA, and the gradient norm is clipped at $3.0$.
The residual predictor is trained separately for each
video using only its observed measurements. The masked
KPD initialization and factor-scale balancing follow
Section~\ref{sec:masked}.

\begin{table*}[!t]
\centering
\caption{Residual-supervision ablation at $\mathrm{SR}=0.2$ and $K=4$.
Sequence entries are full-video PSNR/frame-wise full-frame SSIM; validation
MAE is averaged over the three development sequences. Best and second-best
results are bold and underlined.}
\label{tab:method-ablation}
\scriptsize
\setlength{\tabcolsep}{4.2pt}
\begin{tabular}{lcccccc}
\toprule
Method & Akiyo & Foreman & Mobile & Val. MAE $\downarrow$ & Avg. PSNR $\uparrow$ & Avg. SSIM $\uparrow$\\
\midrule
KPD
& 22.989/0.5589 & 19.348/0.4286 & 16.002/0.3591 & 0.092783 & 19.446 & 0.4488\\
Simple
& 24.801/0.6565 & 25.462/0.7554 & 19.050/0.5687 & 0.056637 & 23.104 & 0.6602\\
Direct
& \underline{26.280/0.7282} & \underline{25.539/0.7670} & \textbf{20.464/0.6998}
& \underline{0.049610} & \underline{24.095} & \underline{0.7317}\\
CrossFit
& \textbf{26.972/0.7465} & \textbf{25.771/0.7859} & \underline{20.397/0.6936}
& \textbf{0.047503} & \textbf{24.380} & \textbf{0.7420}\\
\bottomrule
\end{tabular}
\end{table*}

\begin{figure*}[!t]
\centering
\includegraphics[width=0.9\textwidth]{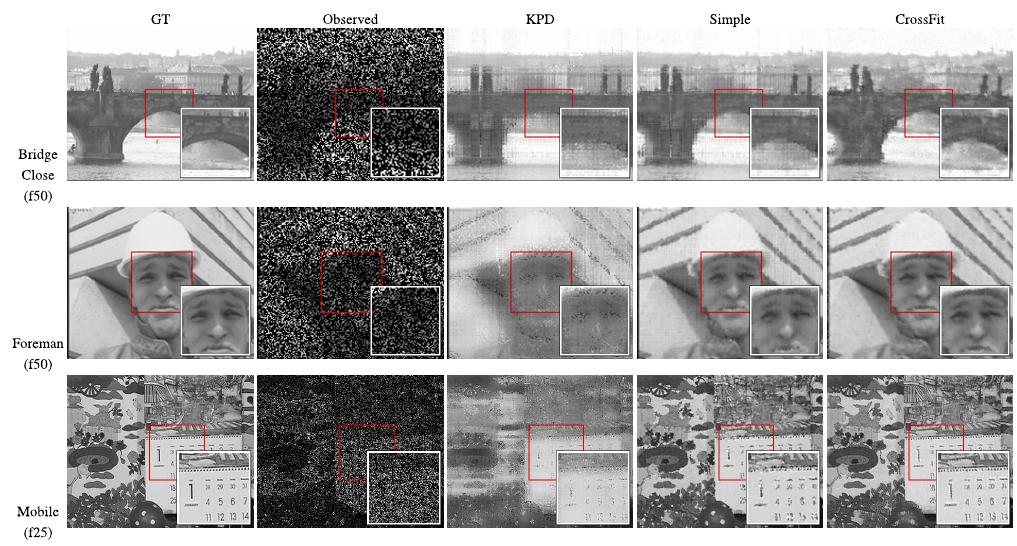}
\caption{Stage-wise visual ablation at $\mathrm{SR}=0.3$: ground truth,
observed input, KPD backbone, simple residual propagation, and CrossFit
reconstruction. A common ROI is enlarged within each sequence.}
\label{fig:stagewise-qualitative-sr03}
\end{figure*}

The grayscale experiments are conducted on an AMD
Ryzen 9 9950X CPU, an NVIDIA RTX 4090 GPU (24 GB),
and 128 GB RAM, using PyTorch 2.0.1, CUDA 11.8,
and MATLAB R2022a. The RGB experiments use the same
CPU and RAM with an NVIDIA RTX 5080 GPU (16 GB),
PyTorch 2.7.1, CUDA 12.8, and MATLAB. KBR, TNN,
and SALTS run on CPU workers, while the
remaining methods use their respective Python backends.
Runtime results are reported separately for the two
hardware configurations. The source code of the proposed method and the corresponding experimental scripts are publicly available at \url{https://github.com/QQ-SDU/KPD-Video-Completion}.

We use peak signal-to-noise ratio (PSNR), structural
similarity (SSIM) \cite{wang2004ssim}, temporal-gradient
mean absolute error (T-MAE), and wall-clock runtime
for evaluation. Grayscale PSNR is measured over the
full video, whereas RGB PSNR is computed over missing
entries. SSIM is computed over each full frame and then
averaged across frames. T-MAE evaluates the accuracy of temporal
intensity changes at missing tubes. Let $\mathcal M$
denote the set of missing spatial--channel indices,
$T$ the number of frames, and
$\Delta_t X_p=X_{p,t+1}-X_{p,t}$. T-MAE is defined as
\begin{equation}
\label{eq:temporal-mae-main}
\mathrm{TMAE}_{\mathrm{miss}}=
\frac{\displaystyle\sum_{t=1}^{T-1}
\sum_{p\in\mathcal M}
\left|\Delta_t\widehat X_p-\Delta_t X_p\right|}
{(T-1)|\mathcal M|},
\end{equation}
where $X$ and $\widehat X$ denote the reference and
reconstructed videos, respectively. All reconstruction
metrics are computed after enforcing data consistency.
Higher PSNR and SSIM indicate better reconstruction
quality, while lower T-MAE indicates more accurate
reconstruction of frame-to-frame intensity changes.

\begin{table}[!t]
\centering
\caption{Residual-network input ablation on the three development sequences
at $\mathrm{SR}=0.2$. PSNR is computed over the full reconstructed video and
SSIM is averaged frame-wise over the full frames.}
\label{tab:input-ablation}
\scriptsize
\setlength{\tabcolsep}{3.5pt}
\begin{tabular}{lcccc}
\toprule
Input & Val. MAE $\downarrow$ & PSNR $\uparrow$ & SSIM $\uparrow$ & $\Delta$PSNR\\
\midrule
B    & 0.092944 & 19.360 & 0.4561 & --\\
BR   & 0.060328 & 22.354 & 0.6447 & +2.994\\
BRM  & \underline{0.050618} & \underline{23.944} & \underline{0.7073} & +1.590\\
FULL & \textbf{0.047503} & \textbf{24.380} & \textbf{0.7420} & +0.436\\
\bottomrule
\end{tabular}
\end{table}

\subsection{Ablation of residual supervision}
\begin{table*}[!t]
\centering
\caption{Average PSNR/SSIM on the 23-video benchmark at five tube sampling
rates, with mean wall-clock time per sequence. Best and second-best values are
bold and underlined.}
\label{tab:allrate-comparison}
\scriptsize
\setlength{\tabcolsep}{2.7pt}
\resizebox{\textwidth}{!}{%
\begin{tabular}{l*{5}{cc}c}
\toprule
\multirow{2}{*}{Method}
& \multicolumn{2}{c}{SR=0.1}
& \multicolumn{2}{c}{SR=0.2}
& \multicolumn{2}{c}{SR=0.3}
& \multicolumn{2}{c}{SR=0.4}
& \multicolumn{2}{c}{SR=0.5}
& \multirow{2}{*}{Avg. Time (s) $\downarrow$}\\
\cmidrule(lr){2-3}\cmidrule(lr){4-5}\cmidrule(lr){6-7}\cmidrule(lr){8-9}\cmidrule(lr){10-11}
& PSNR $\uparrow$ & SSIM $\uparrow$
& PSNR $\uparrow$ & SSIM $\uparrow$
& PSNR $\uparrow$ & SSIM $\uparrow$
& PSNR $\uparrow$ & SSIM $\uparrow$
& PSNR $\uparrow$ & SSIM $\uparrow$ & \\
\midrule
DRO-TFF\,\cite{li2025drotff} & 15.091 & 0.1628 & 18.804 & 0.3365 & 22.227 & 0.5120 & 24.787 & 0.6381 & 27.010 & 0.7389 & 7.79 \\
HLRTF\,\cite{luo2022hlrtf} & 20.887 & 0.4819 & 23.626 & 0.6408 & 25.565 & 0.7290 & 26.987 & 0.7757 & 28.425 & 0.8209 & \textbf{7.17} \\
KBR\,\cite{xie2018kbr} & 17.867 & 0.3324 & 21.009 & 0.5127 & 24.428 & 0.6759 & 27.880 & 0.8134 & 30.010 & 0.8993 & 154.21 \\
LRTFR\,\cite{luo2024lrtfr} & 21.793 & 0.5238 & 24.549 & 0.6893 & 26.419 & 0.7784 & 28.011 & 0.8378 & 29.597 & 0.8807 & \underline{7.38} \\
OTLRM\,\cite{wang2025otlrm} & 19.805 & 0.4408 & 24.052 & 0.6616 & 25.194 & 0.7195 & 26.664 & 0.7809 & 28.653 & 0.8436 & 80.91 \\
TNN\,\cite{lu2018tnn} & 19.847 & 0.4036 & 22.619 & 0.5796 & 24.811 & 0.7010 & 26.742 & 0.7870 & 28.669 & 0.8517 & 62.90 \\
SALTS\,\cite{wu2024salts} & 19.567 & 0.4412 & 24.310 & 0.7054 & 26.917 & 0.8183 & 28.974 & 0.8737 & 30.306 & \underline{0.9238} & 1190.95 \\
S$^2$NTNN\,\cite{luo2022s2ntnn} & \textbf{23.975} & \textbf{0.7215} & \textbf{25.991} & \textbf{0.8102} & \underline{27.263} & \underline{0.8409} & \underline{29.048} & \underline{0.8788} & \underline{30.360} & 0.9230 & 52.84 \\
\textbf{Ours} & \underline{22.288} & \underline{0.5577} & \underline{25.349} & \underline{0.7428} & \textbf{27.577} & \textbf{0.8513} & \textbf{29.734} & \textbf{0.8811} & \textbf{30.889} & \textbf{0.9245} & 49.07 \\
\bottomrule
\end{tabular}%
}
\end{table*}

\begin{figure*}[!t]
\centering
\includegraphics[width=1\textwidth]{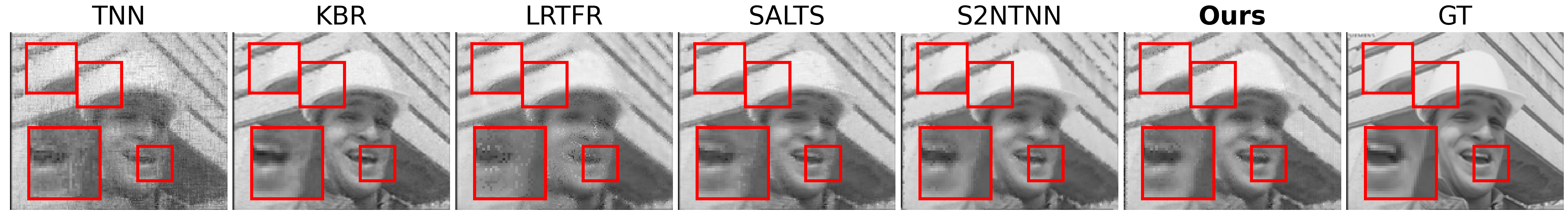}\\[-0.5mm]
\includegraphics[width=1\textwidth]{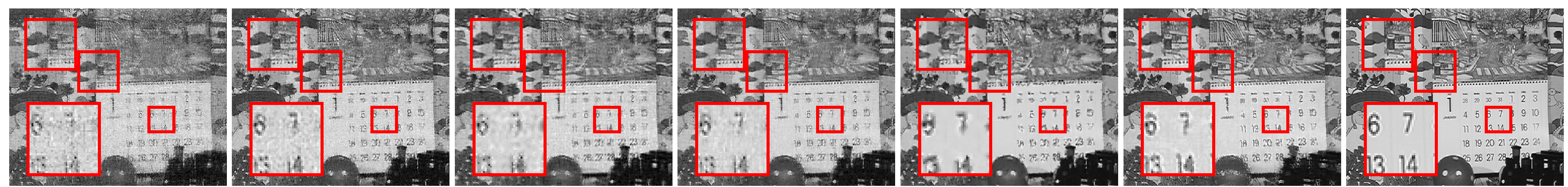}
\caption{Visual comparison at $\mathrm{SR}=0.3$ on Foreman and Mobile. Common frames and ROIs show detail recovery in both sequences.}
\label{fig:qualitative-sr03}
\end{figure*}

We isolate the effect of residual supervision by comparing the KPD backbone,
Gaussian residual propagation (Simple), direct residual learning (Direct),
and cross-fitted residual learning (CrossFit) in Table~\ref{tab:method-ablation}.
Simple interpolates visible KPD residuals spatially and adds the interpolated
field to the backbone. Its Gaussian width
$\sigma\in\{0.75,1,1.5,2,3\}$ and residual scale
$\alpha\in\{0.5,0.75,1,1.25\}$ are selected using MAE on an
internal calibration split containing 15\% of the development-training
tubes. Direct and CrossFit share the same CNN architecture and optimization
budget. In Direct, the backbone is fitted using all development-training
observations before the pseudo-holdout step, so the CNN targets measure
in-sample residuals; CrossFit uses held-out backbone errors.

As shown in Table~\ref{tab:method-ablation}, CrossFit increases the
three-video mean PSNR from $24.095$ to $24.380$ dB
($+0.285$ dB) and SSIM from $0.7317$ to $0.7420$
($+0.0103$), while reducing validation MAE from
$0.049610$ to $0.047503$. CrossFit performs better on
Akiyo and Foreman; Direct obtains slightly higher PSNR
and SSIM on Mobile.

\subsection{Ablation of residual-network inputs}

We assess the contribution of the input channels in
\eqref{eq:network-input}. The four configurations add information
incrementally: B uses $\Bcal_K$; BR adds $\Ecal_c$; BRM also supplies
$\Mcal_c$; FULL uses all four channels. Excluded channels are set to zero
to keep network capacity unchanged. Table~\ref{tab:input-ablation} reports the results.

The visible-residual channel produces the largest PSNR
gain, raising the three-video mean from $19.360$ dB
for B to $22.354$ dB for BR. The observation mask raises
mean PSNR to $23.944$ dB, and the high-pass channel
increases it further to $24.380$ dB. FULL also achieves
the highest SSIM and lowest validation MAE among the four
configurations (Table~\ref{tab:input-ablation}).

\subsection{Stage-wise visual ablation}

Figure~\ref{fig:stagewise-qualitative-sr03} shows the stage-wise
reconstructions. The KPD backbone recovers the main scene geometry, and
the residual-refinement stage recovers additional edge and texture details
in the enlarged regions.

\subsection{Sensitivity analysis}

Using the three grayscale development sequences, we examine sensitivity
to KPD rank, the number of cross-fitting folds, optimization steps, and
spatial and temporal regularization. Figure~\ref{fig:sensitivity-overview} shows
how the reconstruction changes with these settings and identifies the
selected operating points.

\begin{figure}[!htbp]
\centering
\includegraphics[width=1.0\columnwidth]{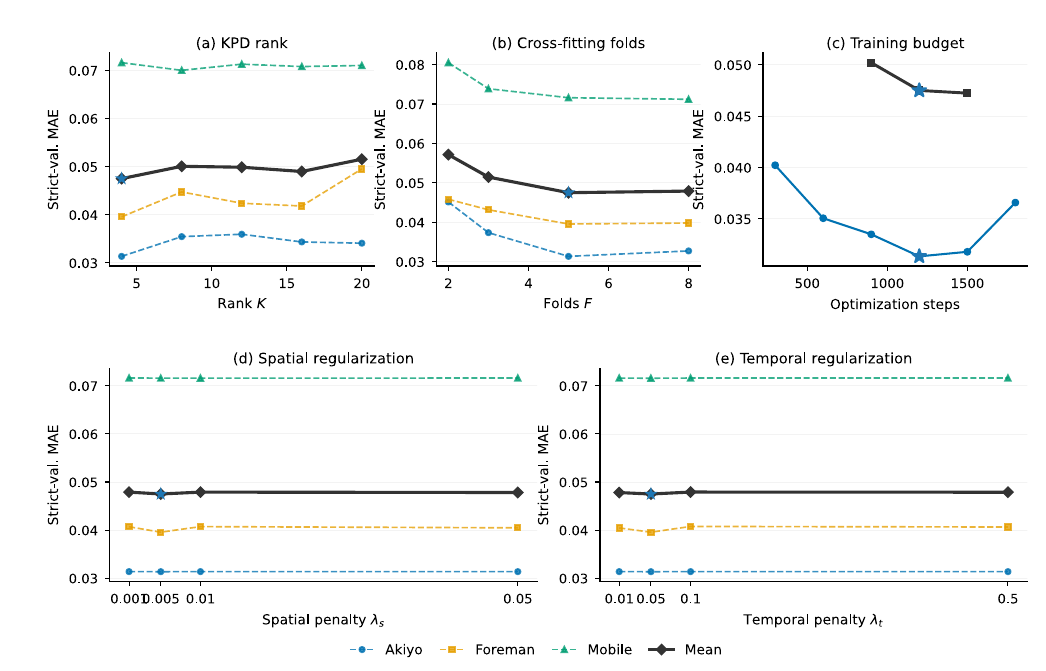}
\caption{Sensitivity on the three grayscale development sequences at
$\mathrm{SR}=0.2$: (a) KPD rank, (b) cross-fitting folds, (c) optimization
steps, (d) spatial regularization, and (e) temporal regularization.
Stars mark the selected settings.}
\label{fig:sensitivity-overview}
\end{figure}

\begin{figure*}[!t]
\centering
\includegraphics[width=1\textwidth]{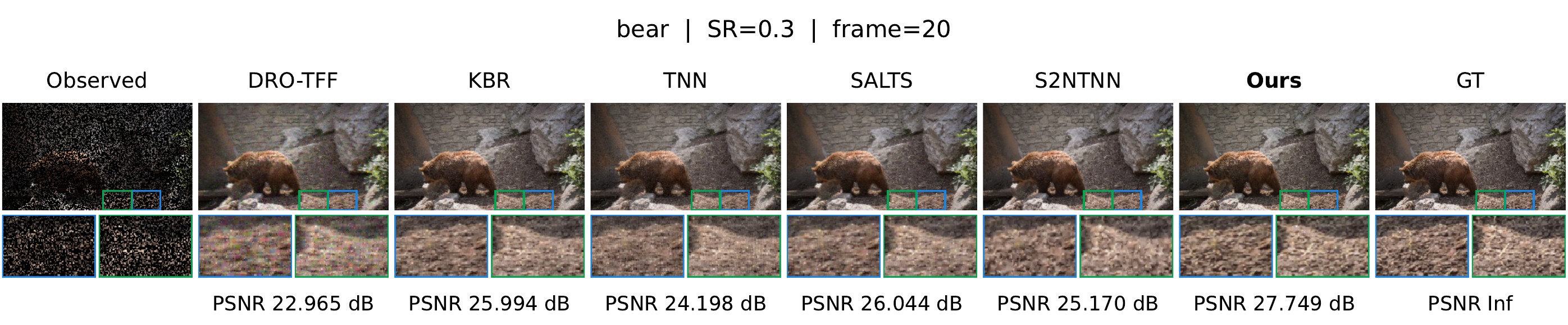}\\[0.6mm]
\includegraphics[width=1\textwidth]{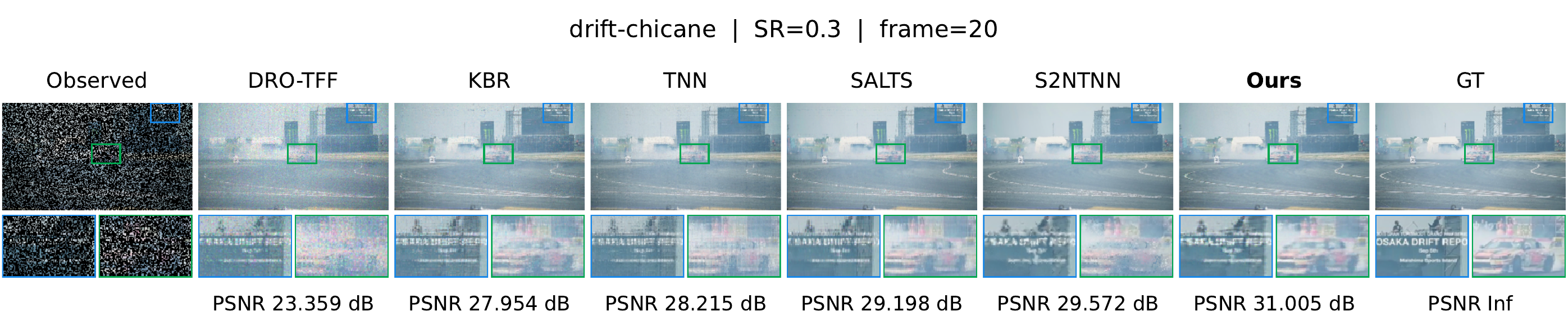}\\[0.6mm]
\includegraphics[width=1\textwidth]{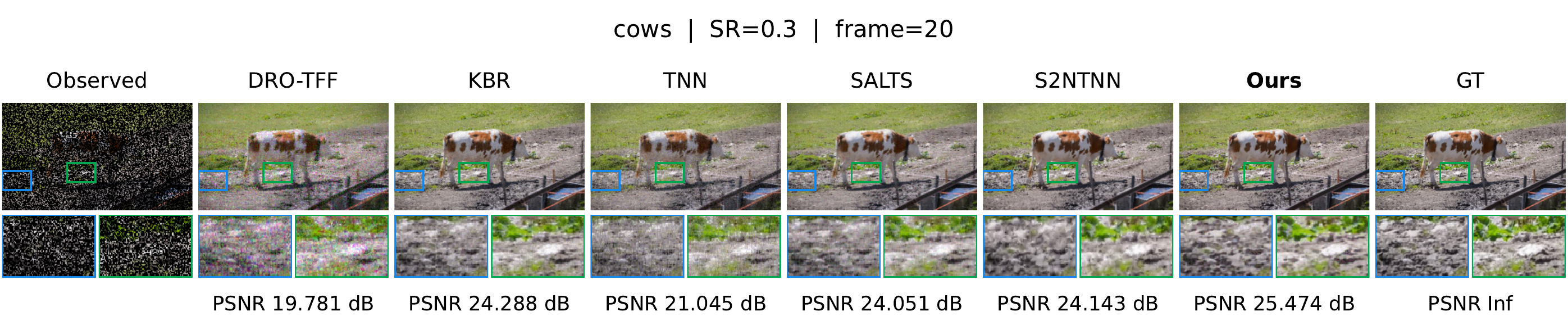}
\caption{Qualitative RGB comparison on DAVIS 2017 at $\mathrm{SR}=0.3$ for three representative sequences: Bear, Drift-chicane, and Cows. Each row uses fixed ROIs across methods. PSNR values are computed over the missing tubes of the full sequence.}
\label{fig:davis-rgb-visual}
\end{figure*}
\label{sec:qual-rgb}
\subsection{Comparison on the 23-video benchmark}

Table~\ref{tab:allrate-comparison} reports PSNR, SSIM, and runtime for
nine methods on 23 grayscale videos. Figure~\ref{fig:final-comparison}
shows PSNR, SSIM, and temporal-gradient MAE across sampling
rates for this benchmark and the eight selected DAVIS 360p RGB sequences. Additional
runtime measurements appear in the supplementary material.
\begin{figure}[!h]
\centering
\includegraphics[width=1.0\columnwidth]{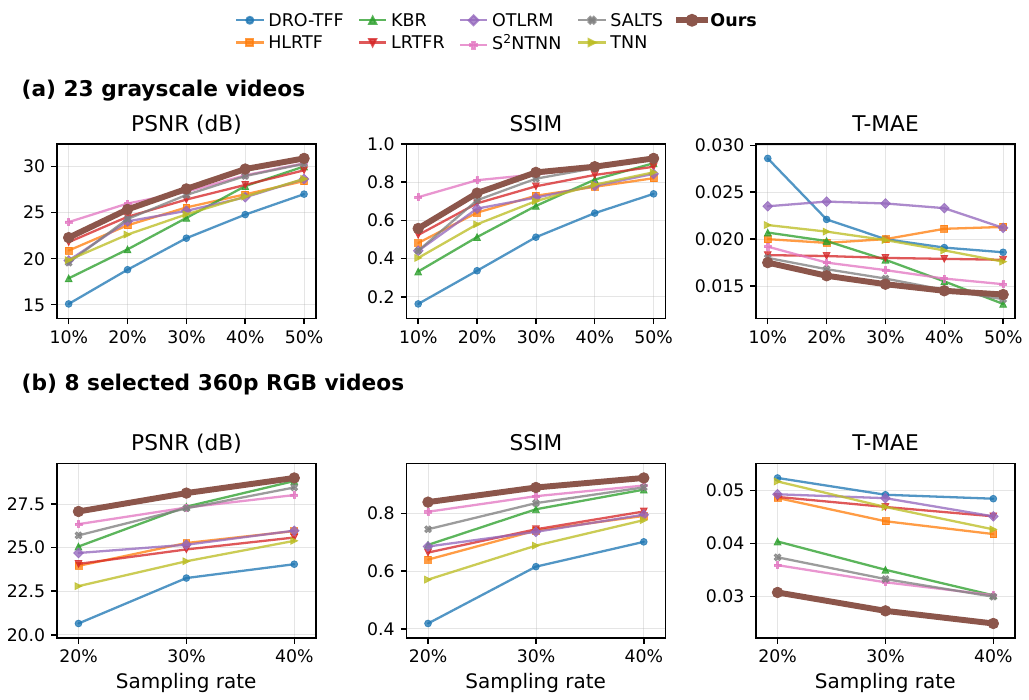}
\caption{Mean PSNR, SSIM, and missing-tube temporal-gradient MAE (T-MAE) versus sampling rate for (a) the 23-video grayscale benchmark and (b) eight selected DAVIS 360p RGB sequences. Thick lines identify Ours. T-MAE measures the error in frame-to-frame intensity differences at missing tubes (\ref{eq:temporal-mae-main}). DAVIS curves use per-sequence measurements; grayscale curves use the recorded benchmark statistics.}
\label{fig:final-comparison}
\end{figure}

Table~\ref{tab:allrate-comparison} shows a change in method ranking with
sampling density. S$^2$NTNN achieves the highest mean PSNR and SSIM at
$\mathrm{SR}=0.1$ and $0.2$. Ours achieves the highest means on both
metrics at $\mathrm{SR}=0.3$, $0.4$, and $0.5$, exceeding the respective
second-best methods by $0.314$, $0.686$, and $0.529$ dB in PSNR and by
$0.0104$, $0.0023$, and $0.0007$ in SSIM. Mean runtime is $49.07$ s
for Ours and $7.17$ s for HLRTF, the fastest method in the table.
Figure~\ref{fig:qualitative-sr03} presents the Foreman and Mobile visual
comparisons.

\subsection{DAVIS 2017 RGB completion at 360p}
\label{sec:davis-rgb}

Table~\ref{tab:davis-rgb-main} reports results for the three videos in
Fig.~\ref{fig:davis-rgb-visual} and averages over the eight
selected DAVIS RGB sequences. The configuration selected on Blackswan
is used at each sampling rate. Mean
missing-region PSNR reaches $27.07$, $28.12$, and $28.99$
dB at $\mathrm{SR}=0.2$, $0.3$, and $0.4$, respectively,
exceeding the best competing mean at each rate by $0.73$,
$0.78$, and $0.18$ dB. Ours also attains the highest mean
full-frame SSIM at these rates ($0.838$, $0.889$, and
$0.922$). Figure~\ref{fig:final-comparison} shows the
corresponding PSNR, SSIM, and temporal-gradient trends.

For missing-tube T-MAE, Ours obtains the lowest mean at
each rate ($0.03079$, $0.02732$, and $0.02495$). It also
has the lowest T-MAE in $19$ of the $24$ video--rate
cases. The supplementary material reports the
per-sequence measurements used to compute these averages.

\subsection{Qualitative RGB comparisons}
\begin{table*}[!t]
\centering
\caption{DAVIS 360p RGB reconstruction results.
(a) Three representative sequences at $\mathrm{SR}=0.3$;
(b) mean results over eight selected sequences.
Each entry reports missing-region PSNR (dB) / full-frame
SSIM on the first line and missing-tube temporal-gradient
MAE on the second line. Best and second-best results
for each metric are bold and underlined, respectively.}
\label{tab:davis-rgb-main}

\begingroup
\fontsize{7.6}{9.2}\selectfont
\setlength{\tabcolsep}{3pt}
\renewcommand{\arraystretch}{1.15}

\begin{tabular*}{\textwidth}{
@{\extracolsep{\fill}}lcccccc@{}}
\toprule
\multirow{2}{*}{Method}
& \multicolumn{3}{c}{(a) Representative sequences ($\mathrm{SR}=0.3$)}
& \multicolumn{3}{c}{(b) Mean over eight sequences} \\
\cmidrule(lr){2-4}\cmidrule(l){5-7}
& Bear & Drift-chicane & Cows
& $\mathrm{SR}=0.2$ & $\mathrm{SR}=0.3$ & $\mathrm{SR}=0.4$ \\
\midrule

DRO-TFF\,\cite{li2025drotff}
& \shortstack[c]{22.96 / 0.652\\0.05461}
& \shortstack[c]{23.36 / 0.428\\0.00883}
& \shortstack[c]{19.78 / 0.527\\0.06434}
& \shortstack[c]{20.65 / 0.419\\0.05233}
& \shortstack[c]{23.25 / 0.615\\0.04917}
& \shortstack[c]{24.04 / 0.701\\0.04842} \\

HLRTF\,\cite{luo2022hlrtf}
& \shortstack[c]{24.11 / 0.733\\0.05164}
& \shortstack[c]{29.51 / 0.846\\0.00777}
& \shortstack[c]{21.86 / 0.640\\0.06312}
& \shortstack[c]{23.95 / 0.640\\0.04856}
& \shortstack[c]{25.26 / 0.740\\0.04419}
& \shortstack[c]{25.94 / 0.791\\0.04177} \\

KBR\,\cite{xie2018kbr}
& \shortstack[c]{25.99 / 0.811\\0.04833}
& \shortstack[c]{27.95 / 0.750\\0.00707}
& \shortstack[c]{\underline{24.29} / 0.765\\0.05631}
& \shortstack[c]{25.06 / 0.691\\0.04038}
& \shortstack[c]{\underline{27.34} / 0.814\\0.03505}
& \shortstack[c]{\underline{28.81} / 0.882\\0.03024} \\

LRTFR\,\cite{luo2024lrtfr}
& \shortstack[c]{24.01 / 0.732\\0.05276}
& \shortstack[c]{28.31 / 0.792\\0.00766}
& \shortstack[c]{23.40 / 0.724\\0.05933}
& \shortstack[c]{24.06 / 0.664\\0.04876}
& \shortstack[c]{24.89 / 0.744\\0.04687}
& \shortstack[c]{25.58 / 0.806\\0.04511} \\

OTLRM\,\cite{wang2025otlrm}
& \shortstack[c]{22.85 / 0.699\\0.06662}
& \shortstack[c]{29.54 / 0.847\\0.00808}
& \shortstack[c]{20.97 / 0.612\\0.08274}
& \shortstack[c]{24.69 / 0.685\\0.04926}
& \shortstack[c]{25.16 / 0.736\\0.04853}
& \shortstack[c]{25.97 / 0.795\\0.04510} \\

TNN\,\cite{lu2018tnn}
& \shortstack[c]{24.20 / 0.742\\0.05048}
& \shortstack[c]{28.22 / 0.798\\0.00763}
& \shortstack[c]{21.04 / 0.599\\0.06610}
& \shortstack[c]{22.79 / 0.570\\0.05166}
& \shortstack[c]{24.21 / 0.688\\0.04687}
& \shortstack[c]{25.39 / 0.776\\0.04265} \\

SALTS\,\cite{wu2024salts}
& \shortstack[c]{\underline{26.04} / \underline{0.828}\\\underline{0.04534}}
& \shortstack[c]{29.20 / 0.851\\\underline{0.00535}}
& \shortstack[c]{24.05 / 0.759\\0.05480}
& \shortstack[c]{25.70 / 0.745\\0.03740}
& \shortstack[c]{27.25 / 0.835\\0.03330}
& \shortstack[c]{28.44 / 0.889\\\underline{0.02998}} \\

S$^2$NTNN\,\cite{luo2022s2ntnn}
& \shortstack[c]{25.17 / 0.819\\0.04730}
& \shortstack[c]{\underline{29.57} / \underline{0.890}\\0.00766}
& \shortstack[c]{24.14 / \underline{0.790}\\\underline{0.05318}}
& \shortstack[c]{\underline{26.34} / \underline{0.805}\\\underline{0.03592}}
& \shortstack[c]{27.29 / \underline{0.859}\\\underline{0.03270}}
& \shortstack[c]{28.01 / \underline{0.896}\\0.03028} \\

\textbf{Ours}
& \shortstack[c]{\textbf{27.75} / \textbf{0.908}\\\textbf{0.03147}}
& \shortstack[c]{\textbf{31.00} / \textbf{0.906}\\\textbf{0.00476}}
& \shortstack[c]{\textbf{25.47} / \textbf{0.849}\\\textbf{0.04516}}
& \shortstack[c]{\textbf{27.07} / \textbf{0.838}\\\textbf{0.03079}}
& \shortstack[c]{\textbf{28.12} / \textbf{0.889}\\\textbf{0.02732}}
& \shortstack[c]{\textbf{28.99} / \textbf{0.922}\\\textbf{0.02495}} \\

\bottomrule
\end{tabular*}
\endgroup
\end{table*}

\begin{figure*}[!t]
\centering
\subfloat[Contiguous block mask.]{\includegraphics[width=1\textwidth]{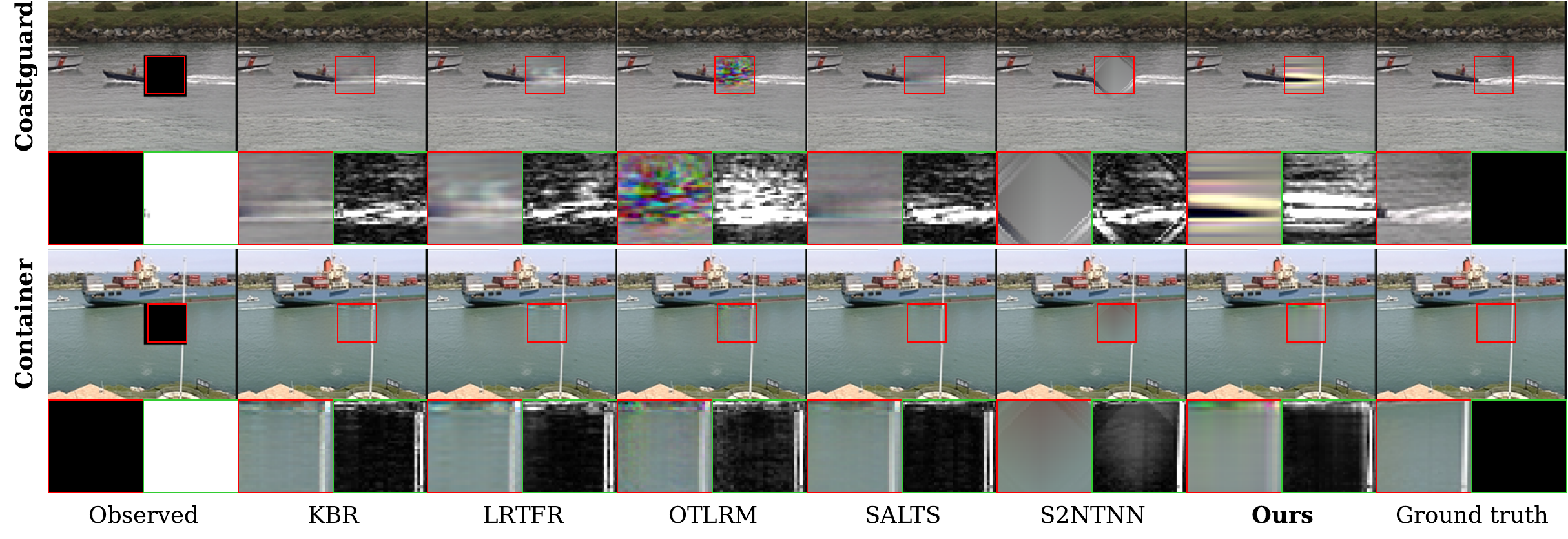}}\\[-0.2mm]
\subfloat[Distributed grid mask.]{\includegraphics[width=1\textwidth]{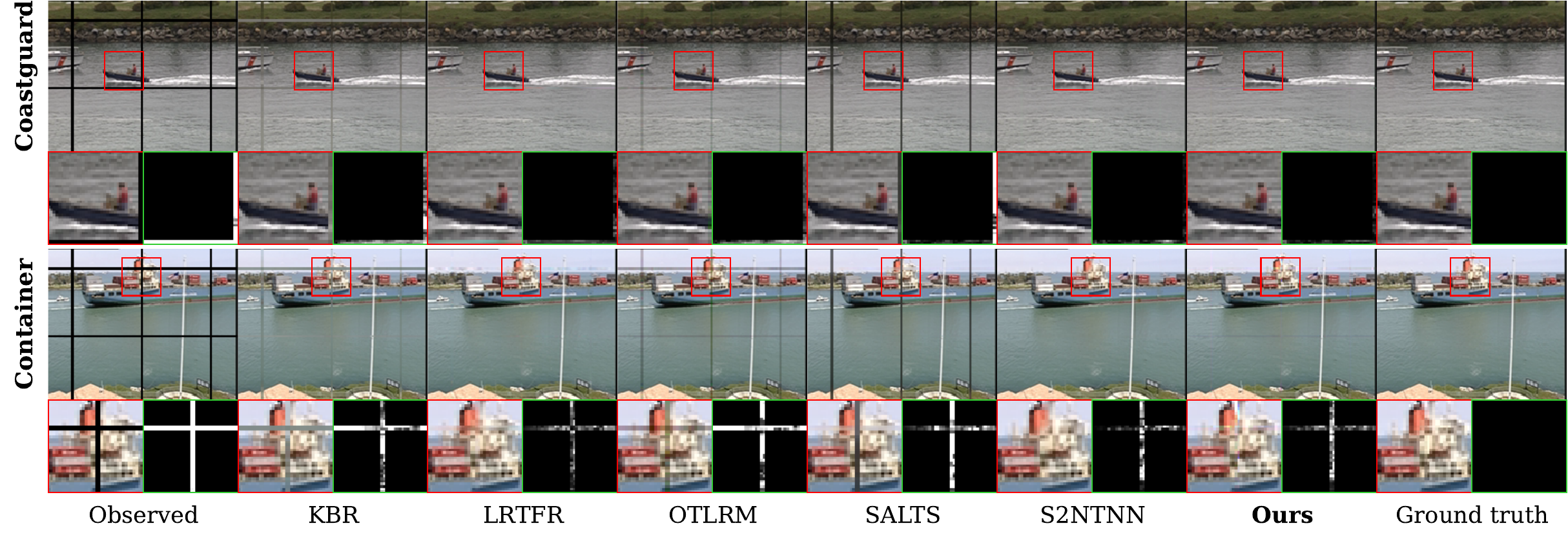}}
\caption{Structured-mask RGB visual comparisons on Coastguard and Container: (a) contiguous block and (b) distributed grid missingness. The observed input, reconstructions, and ground truth are shown with common regions of interest for each video and mask.}
\label{fig:structured-mask-visuals}
\end{figure*}

Figure~\ref{fig:davis-rgb-visual} compares Bear, Drift-chicane, and Cows
at $\mathrm{SR}=0.3$, using a common frame and two fixed ROIs per
sequence. The eight columns contain the observed input, five representative
baselines, Ours, and ground truth. Ours exceeds the strongest baseline in
missing-region PSNR by $1.71$, $1.43$, and $1.18$ dB on Bear,
Drift-chicane, and Cows, respectively
(Table~\ref{tab:davis-rgb-main}(a)). The enlarged regions show differences in edge and
texture recovery, while the displayed PSNR values are computed over the missing
tubes across all 50 frames. Visual comparisons for the remaining five selected sequences are provided in the supplementary material.

\subsection{Structured-mask experiments on color videos}
\label{sec:color-structured}

\begin{table}[!htbp]
\centering
\caption{Structured-mask RGB completion on Coastguard and Container
(about $6\%$ missing tubes). Each result reports missing-region PSNR
(dB) / full-frame SSIM. Best and second-best values for each video
and mask are bold and underlined, respectively.}
\label{tab:color-structured}
\begingroup
\fontsize{7.85}{9.1}\selectfont
\setlength{\tabcolsep}{2pt}
\renewcommand{\arraystretch}{1.1}
\begin{tabular*}{\columnwidth}{@{\extracolsep{\fill}}llcc@{}}
\toprule
Method & Mask & Coastguard & Container \\
\midrule
\multirow{2}{*}{DRO-TFF\,\cite{li2025drotff}} & Block & 15.52 / 0.9447 & 19.57 / 0.9456 \\
 & Grid & 20.35 / 0.9408 & 16.69 / 0.9028 \\
\addlinespace[1pt]
\multirow{2}{*}{HLRTF\,\cite{luo2022hlrtf}} & Block & 15.97 / 0.9473 & 20.81 / 0.9639 \\
 & Grid & 7.46 / 0.8076 & 4.84 / 0.7848 \\
\addlinespace[1pt]
\multirow{2}{*}{KBR\,\cite{xie2018kbr}} & Block & \textbf{19.74} / \textbf{0.9658} & \underline{23.32} / 0.9762 \\
 & Grid & 16.54 / 0.8941 & 14.89 / 0.9216 \\
\addlinespace[1pt]
\multirow{2}{*}{LRTFR\,\cite{luo2024lrtfr}} & Block & \underline{19.39} / \underline{0.9636} & 22.04 / 0.9721 \\
 & Grid & 26.60 / 0.9781 & 25.61 / 0.9809 \\
\addlinespace[1pt]
\multirow{2}{*}{OTLRM\,\cite{wang2025otlrm}} & Block & 13.56 / 0.9424 & 22.09 / 0.9682 \\
 & Grid & 22.15 / 0.9471 & 15.05 / 0.8809 \\
\addlinespace[1pt]
\multirow{2}{*}{TNN\,\cite{lu2018tnn}} & Block & 17.90 / 0.9602 & 22.70 / 0.9753 \\
 & Grid & 6.41 / 0.7993 & 3.65 / 0.7792 \\
\addlinespace[1pt]
\multirow{2}{*}{SALTS\,\cite{wu2024salts}} & Block & 18.45 / 0.9634 & \textbf{23.75} / \textbf{0.9797} \\
 & Grid & 13.10 / 0.8949 & 11.74 / 0.8887 \\
\addlinespace[1pt]
\multirow{2}{*}{S$^2$NTNN\,\cite{luo2022s2ntnn}} & Block & 17.47 / 0.9506 & 20.20 / 0.9735 \\
 & Grid & \underline{29.45} / \underline{0.9862} & \underline{26.11} / \underline{0.9827} \\
\addlinespace[1pt]
\multirow{2}{*}{\textbf{Ours}} & Block & 13.14 / 0.9509 & 22.56 / \underline{0.9763} \\
 & Grid & \textbf{30.15} / \textbf{0.9880} & \textbf{26.40} / \textbf{0.9858} \\
\bottomrule
\end{tabular*}
\endgroup
\end{table}

Table~\ref{tab:color-structured} compares contiguous-block
and distributed-grid masks on Coastguard and Container at
comparable missing fractions. Under grid missingness, Ours
achieves the highest PSNR and SSIM on both videos:
$30.15$ dB/$0.9880$ on Coastguard and $26.40$ dB/$0.9858$
on Container. Its PSNR margins over the second-best methods
are $0.70$ and $0.29$ dB, respectively. For contiguous
blocks, KBR leads on Coastguard and SALTS leads on
Container. Figure~\ref{fig:structured-mask-visuals}
shows the recovered regions under both masks. The grid
pattern leaves observed content around distributed missing
locations, whereas the block pattern removes a contiguous
region; the method rankings differ between these two
sampling geometries.

\FloatBarrier
\section{Conclusion}\label{sec:conclusion}

This paper presented a video completion framework with two
complementary stages. First, a regularized masked KPD
estimator fits a compact structural model to observed
entries using cyclic component least squares and
regularizer-optimal scale balancing. The exact updates
yield nonincreasing objective values. Second, tube-wise
cross-fitting supplies held-out structural prediction
errors for self-supervised residual learning. A lightweight
3-D CNN refines the backbone reconstruction, and exact
data consistency preserves the observations.

Ablation studies show that cross-fitted supervision improves
mean reconstruction quality compared with direct
residual learning. The complete method achieves the highest
mean PSNR and SSIM at the higher grayscale sampling rates
and at every evaluated rate on the eight selected DAVIS RGB
sequences. Method rankings differ between grid and contiguous-block
missingness in the structured-mask experiments. Future work will investigate richer temporal
motion modeling and reconstruction of large contiguous
missing regions.

\end{document}


\maketitle
\noindent This supplement presents the 360p RGB parameter study,
per-sequence results, temporal-gradient MAE, runtime measurements, and
additional visual comparisons. The evaluation includes eight selected
DAVIS sequences, three sampling rates, and nine methods. The configuration
is selected using Blackswan, which is excluded from the eight-sequence
averages. The evaluation tables report measurements for these eight videos.

\appendices

\section{Single-Sequence Hyperparameter Selection for 360p RGB Video Completion}
\label{app:rgb360-hyperparameters}

\subsection{Selection Protocol}
\label{app:rgb360-protocol}

We select the 360p RGB configuration using Blackswan from DAVIS 2017.
The study varies KPD rank, spatial convolutional kernel size, CNN width,
the number of optimization steps, and training-patch size. The sequence contains $50$ RGB frames
resized to $360\times640$ pixels. All configurations use tube-wise sampling
at $\mathrm{SR}=0.3$ with seed $20261014$ and the same mask across frames
and channels. Each channel is reconstructed separately before the RGB
video is assembled.

We select the configuration using missing-region PSNR on Blackswan and
also report full-frame SSIM. Reference values at missing pixels
are used for this development comparison, whereas the cross-fitted CNN is trained
on observed measurements. Settings not varied in a given sweep remain fixed,
including five cross-fitting folds, 12 masked KPD sweeps, batch size two,
and the regularization and learning settings of the implementation.

This configuration is then used for all eight selected RGB videos
at all three sampling rates. Tables in this appendix report the Blackswan
parameter study; the main paper reports the evaluation results.

\subsection{KPD Rank and Spatial Convolutional Kernel}
\label{app:rgb360-rank-kernel}

We vary KPD rank $K\in\{4,8,12\}$ and spatial kernel width
$k_s\in\{3,5\}$, fixing temporal kernel width $k_t=3$, CNN width
$h=12$, training steps at $1200$, and patch size at
$16\times48\times48$. With the smaller spatial kernel, PSNR increases
with rank (Table~\ref{tab:supp-rgb360-rank-kernel}). The
$3\times3\times3$ kernel gives the highest PSNR at $K=12$ and is retained.

\begin{table}[!t]
\centering
\caption{Rank and spatial-kernel sensitivity on Blackswan
($360\times640\times50$, RGB; $\mathrm{SR}=0.3$). The hidden width and
training budget are fixed at $h=12$ and $1200$ steps.}
\label{tab:supp-rgb360-rank-kernel}
\footnotesize
\setlength{\tabcolsep}{5pt}
\begin{tabular}{cccc}
\toprule
Rank $K$ & Conv3D kernel $(t,h,w)$ & Missing PSNR $\uparrow$ & Full SSIM $\uparrow$ \\
\midrule
$4$  & $3\times3\times3$ & 27.1250 & 0.8510 \\
$4$  & $3\times5\times5$ & 27.1250 & 0.8500 \\
$8$  & $3\times3\times3$ & 27.3964 & 0.8589 \\
$8$  & $3\times5\times5$ & 26.7135 & 0.8393 \\
$12$ & $3\times3\times3$ & \textbf{27.6069} & \textbf{0.8638} \\
$12$ & $3\times5\times5$ & 26.8058 & 0.8408 \\
\bottomrule
\end{tabular}
\end{table}

\begin{figure}[!t]
\centering
\includegraphics[width=0.88\columnwidth]{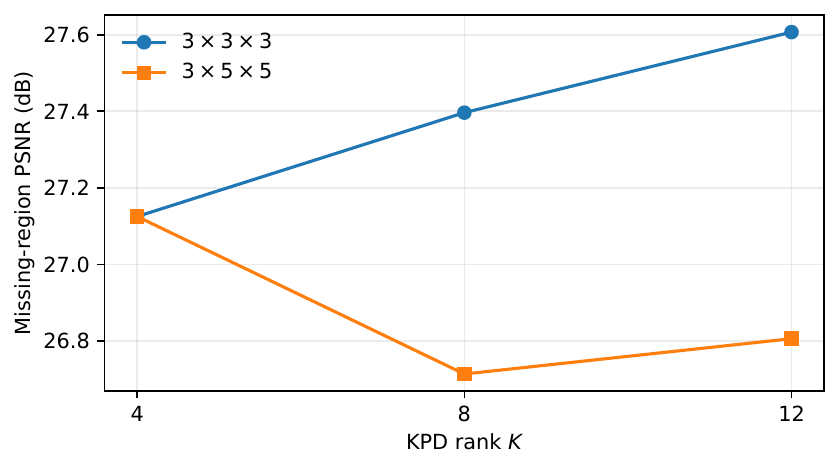}
\caption{Missing-region PSNR versus KPD rank for two spatial convolutional
kernel sizes on the designated development sequence. All other settings are
fixed as in Table~\ref{tab:supp-rgb360-rank-kernel}.}
\label{fig:supp-rgb360-rank-kernel}
\end{figure}

\subsection{Structural Rank Versus CNN Width}
\label{app:rgb360-rank-width}

We compare the effects of increasing backbone rank and residual-network
width. The kernel size is fixed at $3\times3\times3$, with $1200$ training steps and
$16\times48\times48$ patches. The rank sweep uses $h=12$, and the width
sweep fixes $K=12$; both include the reference setting $(K,h)=(12,12)$.

Table~\ref{tab:supp-rgb360-rank-width} shows that increasing rank above
$K=12$ changes PSNR only modestly while increasing runtime. At $K=12$,
wider CNNs improve PSNR and SSIM, with the highest values at $h=32$.
We use $K=12$ and $h=32$ in the training-budget study.

\begin{table}[!t]
\centering
\caption{KPD-rank and CNN-width sensitivity on Blackswan at
$\mathrm{SR}=0.3$, with a $3\times3\times3$ kernel and $1200$ steps.
Reported runtimes are measured end to end for each video (seconds).}
\label{tab:supp-rgb360-rank-width}
\footnotesize
\setlength{\tabcolsep}{5pt}
\begin{tabular}{ccccc}
\toprule
Rank $K$ & Width $h$ & Missing PSNR $\uparrow$ & Full SSIM $\uparrow$ & Time (s) \\
\midrule
$12$ & $12$ & 27.6069 & 0.8638 & 832.7 \\
$12$ & $24$ & 27.7573 & 0.8722 & 833.5 \\
$12$ & $32$ & \textbf{27.8172} & \textbf{0.8734} & 833.3 \\
\midrule
$16$ & $12$ & 27.5599 & 0.8616 & 1088.5 \\
$20$ & $12$ & 27.6258 & 0.8626 & 1344.0 \\
$24$ & $12$ & 27.5880 & 0.8610 & 1594.7 \\
\bottomrule
\end{tabular}
\end{table}

\subsection{Optimization Steps and Training-Patch Size}
\label{app:rgb360-steps-patch}

With $K=12$, $h=32$, and a $3\times3\times3$ kernel fixed, we vary
the number of training steps and spatial patch size while keeping the temporal patch length
at 16 frames. Table~\ref{tab:supp-rgb360-steps-patch} reports increasing
PSNR with larger patches and longer training in the tested ranges, with
nonmonotone behavior at the largest budget. The highest measured PSNR
is obtained with $3600$ steps and $96\times96$ spatial patches. Changing patch size alters
the training context but leaves the CNN receptive field unchanged.

\begin{table}[!t]
\centering
\caption{Training-budget and patch-size study on Blackswan at
$\mathrm{SR}=0.3$; $K=12$, $h=32$, and a $3\times3\times3$ kernel are fixed.
All patches have temporal length $16$.}
\label{tab:supp-rgb360-steps-patch}
\footnotesize
\setlength{\tabcolsep}{6pt}
\begin{tabular}{ccccc}
\toprule
Setting & Steps & Spatial patch & Missing PSNR $\uparrow$ & Full SSIM $\uparrow$ \\
\midrule
A0  & 1200 & $48\times48$ & 27.8172 & 0.8734 \\
S2  & 2400 & $48\times48$ & 28.2571 & 0.8884 \\
P64 & 2400 & $64\times64$ & 28.6947 & 0.8976 \\
P96 & 2400 & $96\times96$ & 28.7211 & 0.8997 \\
S3  & 3600 & $48\times48$ & 28.8627 & 0.9027 \\
C1  & 3600 & $64\times64$ & 28.8469 & 0.9018 \\
\textbf{C2}  & \textbf{3600} & $\mathbf{96\times96}$ & \textbf{28.9023} & \textbf{0.9047} \\
C3  & 4800 & $48\times48$ & 28.6160 & 0.8957 \\
\bottomrule
\end{tabular}
\end{table}

\begin{figure}[!t]
\centering
\includegraphics[width=0.88\columnwidth]{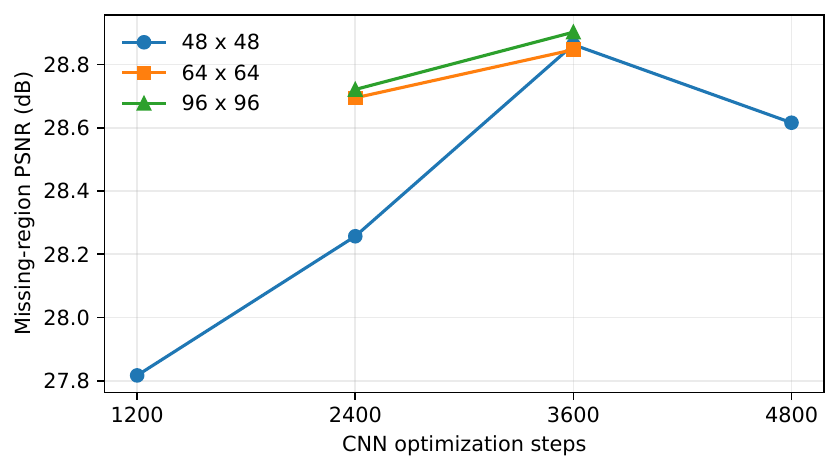}
\caption{Sensitivity to the optimization budget at three training-patch
sizes on the development sequence. Curves connect only the evaluated configurations.}
\label{fig:supp-rgb360-steps-patch}
\end{figure}

\subsection{Final RGB Configuration}
\label{app:rgb360-frozen}

The development study selects the following fixed 360p configuration:
\begin{equation}
\boxed{\begin{gathered}
K=12,\quad h=32,\quad (k_t,k_s,k_s)=(3,3,3),\\
N_{\mathrm{step}}=3600,\quad (T_p,H_p,W_p)=(16,96,96).
\end{gathered}}
\label{eq:supp-rgb360-frozen}
\end{equation}
Five cross-fitting folds, 12 masked KPD sweeps, and the remaining
regularization settings are unchanged. Relative to the initial $K=4$,
$h=12$, $1200$-step configuration with $16\times48\times48$ patches,
the selected configuration improves missing-region PSNR on Blackswan from
$27.1250$ to $28.9023$~dB and full-frame SSIM from $0.8510$ to $0.9047$.
These values are measured on the development sequence used for selection.

We apply this configuration to all eight selected RGB sequences at all
three sampling rates. Each sequence is reconstructed using its own
self-supervised fit with the same hyperparameter settings.


\section{Supplementary Evaluation Results}
\label{app:supp-evaluation}

\subsection{Selected-Sequence Quantitative Comparison}
\label{app:selected-davis}

The eight selected DAVIS 2017 videos are Bear, Drift-chicane, Cows, Dog,
Camel, Kite-surf, Parkour, and Breakdance. Each contains $50$ RGB frames
resized to $360\times640$ pixels. For each video and sampling rate,
all methods use the same tube-wise mask with seed $20261014$. The
configuration selected on Blackswan (Appendix~\ref{app:rgb360-hyperparameters})
is applied to the eight sequences. Tables~\ref{tab:supp-davis-psnr-sr20}--
\ref{tab:supp-davis-psnr-sr40} report missing-region PSNR, and
Tables~\ref{tab:supp-davis-ssim-sr20}--\ref{tab:supp-davis-ssim-sr40}
report full-frame SSIM. The mean rows give the eight-video
averages reported in Table~V of the main paper. Table~\ref{tab:supp-rgb-selected-runtime}
reports runtimes for the same implementations.

\begin{table}[!t]
\centering
\caption{Median wall-clock time (seconds) per reconstructed RGB video in the comparison of eight selected sequences. Times are measured using the original CPU/GPU implementations.}
\label{tab:supp-rgb-selected-runtime}
\footnotesize
\setlength{\tabcolsep}{4pt}
\begin{tabular}{lc}
\toprule
Method & Median time (s) \\
\midrule
DRO-TFF & 98.2 \\
HLRTF & 62.6 \\
KBR & 2851.5 \\
LRTFR & 45.4 \\
OTLRM & 434.1 \\
TNN & 926.1 \\
SALTS & 16092.2 \\
S$^2$NTNN & 466.9 \\
\textbf{Ours} & 1859.2 \\
\bottomrule
\end{tabular}
\end{table}


\begin{table*}[!t]
\centering
\caption{Missing-region PSNR (dB) on the eight selected DAVIS RGB sequences at $\mathrm{SR}=0.2$. Best and second-best values per sequence are bold and underlined.}
\label{tab:supp-davis-psnr-sr20}
\scriptsize
\setlength{\tabcolsep}{3pt}
\resizebox{\textwidth}{!}{%
\begin{tabular}{lccccccccc}
\toprule
Sequence & DRO-TFF & HLRTF & KBR & LRTFR & OTLRM & TNN & SALTS & S$^2$NTNN & \textbf{Ours}\\
\midrule
bear & 18.49 & 23.21 & 24.59 & 23.40 & 22.46 & 22.72 & \underline{24.88} & 24.31 & \textbf{26.85} \\
drift-chicane & 21.87 & 28.12 & 23.38 & 27.24 & \underline{28.95} & 26.86 & 27.33 & 28.78 & \textbf{29.89} \\
cows & 16.16 & 20.98 & 22.75 & 22.61 & 20.79 & 19.85 & 22.80 & \underline{23.21} & \textbf{24.47} \\
dog & 24.78 & 25.32 & 29.07 & 24.47 & 26.83 & 23.83 & 28.55 & \underline{29.43} & \textbf{29.97} \\
camel & 17.06 & 21.85 & 22.39 & 22.68 & 22.45 & 20.75 & 23.93 & \underline{24.15} & \textbf{24.50} \\
kite-surf & 25.26 & 27.22 & \underline{30.52} & 26.92 & 28.83 & 26.33 & 30.33 & 29.71 & \textbf{30.88} \\
parkour & 23.50 & 23.99 & \underline{27.09} & 23.06 & 25.23 & 23.05 & 26.78 & \textbf{27.36} & 26.87 \\
breakdance & 18.06 & 20.92 & 20.69 & 22.12 & 21.96 & 18.91 & 20.99 & \textbf{23.77} & \underline{23.12} \\
\midrule
Mean & 20.65 & 23.95 & 25.06 & 24.06 & 24.69 & 22.79 & 25.70 & \underline{26.34} & \textbf{27.07} \\
\bottomrule
\end{tabular}}
\end{table*}


\begin{table*}[!t]
\centering
\caption{Missing-region PSNR (dB) on the eight selected DAVIS RGB sequences at $\mathrm{SR}=0.3$. Best and second-best values per sequence are bold and underlined.}
\label{tab:supp-davis-psnr-sr30}
\scriptsize
\setlength{\tabcolsep}{3pt}
\resizebox{\textwidth}{!}{%
\begin{tabular}{lccccccccc}
\toprule
Sequence & DRO-TFF & HLRTF & KBR & LRTFR & OTLRM & TNN & SALTS & S$^2$NTNN & \textbf{Ours}\\
\midrule
bear & 22.96 & 24.11 & 25.99 & 24.01 & 22.85 & 24.20 & \underline{26.04} & 25.17 & \textbf{27.75} \\
drift-chicane & 23.36 & 29.51 & 27.95 & 28.31 & 29.54 & 28.22 & 29.20 & \underline{29.57} & \textbf{31.00} \\
cows & 19.78 & 21.86 & \underline{24.29} & 23.40 & 20.97 & 21.04 & 24.05 & 24.14 & \textbf{25.47} \\
dog & 25.50 & 26.92 & \underline{30.74} & 25.25 & 27.39 & 25.32 & 30.10 & 30.63 & \textbf{31.18} \\
camel & 21.66 & 23.02 & \underline{25.15} & 23.25 & 21.76 & 22.17 & 25.07 & 25.12 & \textbf{25.46} \\
kite-surf & 27.64 & 28.23 & \underline{31.99} & 27.42 & 29.67 & 27.66 & 31.71 & 30.70 & \textbf{32.01} \\
parkour & 23.91 & 25.53 & \textbf{28.50} & 23.75 & 25.95 & 24.47 & 28.02 & \underline{28.16} & 27.95 \\
breakdance & 21.20 & 22.91 & 24.13 & 23.75 & 23.15 & 20.64 & 23.80 & \textbf{24.85} & \underline{24.13} \\
\midrule
Mean & 23.25 & 25.26 & \underline{27.34} & 24.89 & 25.16 & 24.21 & 27.25 & 27.29 & \textbf{28.12} \\
\bottomrule
\end{tabular}}
\end{table*}


\begin{table*}[!t]
\centering
\caption{Missing-region PSNR (dB) on the eight selected DAVIS RGB sequences at $\mathrm{SR}=0.4$. Best and second-best values per sequence are bold and underlined.}
\label{tab:supp-davis-psnr-sr40}
\scriptsize
\setlength{\tabcolsep}{3pt}
\resizebox{\textwidth}{!}{%
\begin{tabular}{lccccccccc}
\toprule
Sequence & DRO-TFF & HLRTF & KBR & LRTFR & OTLRM & TNN & SALTS & S$^2$NTNN & \textbf{Ours}\\
\midrule
bear & 23.51 & 24.72 & \underline{27.24} & 24.50 & 23.88 & 25.34 & 26.95 & 25.84 & \textbf{28.57} \\
drift-chicane & 24.54 & 29.55 & 30.39 & 29.43 & \underline{30.69} & 29.29 & 30.69 & 29.95 & \textbf{32.28} \\
cows & 21.98 & 22.31 & \underline{25.44} & 23.93 & 21.31 & 22.04 & 25.05 & 24.86 & \textbf{26.04} \\
dog & 25.60 & 27.85 & \textbf{32.38} & 25.92 & 28.17 & 26.64 & 31.46 & 31.51 & \underline{32.30} \\
camel & 22.26 & 23.69 & \textbf{26.28} & 23.60 & 22.75 & 23.27 & 25.98 & 25.82 & \underline{26.04} \\
kite-surf & 27.91 & 29.14 & \textbf{33.38} & 27.79 & 30.06 & 28.78 & 32.90 & 31.38 & \underline{32.98} \\
parkour & 24.05 & 26.40 & \textbf{29.75} & 24.45 & 26.81 & 25.64 & 29.04 & \underline{29.11} & 28.79 \\
breakdance & 22.50 & 23.90 & \textbf{25.66} & 24.99 & 24.07 & 22.08 & 25.49 & \underline{25.59} & 24.90 \\
\midrule
Mean & 24.04 & 25.94 & \underline{28.81} & 25.58 & 25.97 & 25.39 & 28.44 & 28.01 & \textbf{28.99} \\
\bottomrule
\end{tabular}}
\end{table*}


\begin{table*}[!t]
\centering
\caption{Full-frame SSIM on the eight selected DAVIS RGB sequences at $\mathrm{SR}=0.2$. Best and second-best values per sequence are bold and underlined.}
\label{tab:supp-davis-ssim-sr20}
\scriptsize
\setlength{\tabcolsep}{3pt}
\resizebox{\textwidth}{!}{%
\begin{tabular}{lccccccccc}
\toprule
Sequence & DRO-TFF & HLRTF & KBR & LRTFR & OTLRM & TNN & SALTS & S$^2$NTNN & \textbf{Ours}\\
\midrule
bear & 0.320 & 0.642 & 0.715 & 0.649 & 0.639 & 0.622 & 0.744 & \underline{0.749} & \textbf{0.868} \\
drift-chicane & 0.316 & 0.780 & 0.502 & 0.707 & 0.811 & 0.717 & 0.755 & \underline{0.854} & \textbf{0.861} \\
cows & 0.271 & 0.537 & 0.647 & 0.627 & 0.554 & 0.464 & 0.649 & \underline{0.710} & \textbf{0.785} \\
dog & 0.588 & 0.632 & 0.820 & 0.593 & 0.724 & 0.536 & 0.802 & \underline{0.848} & \textbf{0.872} \\
camel & 0.264 & 0.572 & 0.671 & 0.648 & 0.637 & 0.517 & 0.726 & \underline{0.770} & \textbf{0.801} \\
kite-surf & 0.591 & 0.742 & 0.852 & 0.741 & 0.793 & 0.677 & \underline{0.865} & 0.850 & \textbf{0.903} \\
parkour & 0.569 & 0.600 & 0.738 & 0.656 & 0.656 & 0.546 & 0.768 & \textbf{0.840} & \underline{0.831} \\
breakdance & 0.430 & 0.611 & 0.579 & 0.691 & 0.665 & 0.485 & 0.650 & \textbf{0.820} & \underline{0.788} \\
\midrule
Mean & 0.419 & 0.640 & 0.691 & 0.664 & 0.685 & 0.570 & 0.745 & \underline{0.805} & \textbf{0.838} \\
\bottomrule
\end{tabular}}
\end{table*}


\begin{table*}[!t]
\centering
\caption{Full-frame SSIM on the eight selected DAVIS RGB sequences at $\mathrm{SR}=0.3$. Best and second-best values per sequence are bold and underlined.}
\label{tab:supp-davis-ssim-sr30}
\scriptsize
\setlength{\tabcolsep}{3pt}
\resizebox{\textwidth}{!}{%
\begin{tabular}{lccccccccc}
\toprule
Sequence & DRO-TFF & HLRTF & KBR & LRTFR & OTLRM & TNN & SALTS & S$^2$NTNN & \textbf{Ours}\\
\midrule
bear & 0.652 & 0.733 & 0.811 & 0.732 & 0.699 & 0.742 & \underline{0.828} & 0.819 & \textbf{0.908} \\
drift-chicane & 0.428 & 0.846 & 0.750 & 0.792 & 0.847 & 0.798 & 0.851 & \underline{0.890} & \textbf{0.906} \\
cows & 0.527 & 0.640 & 0.765 & 0.724 & 0.612 & 0.599 & 0.759 & \underline{0.790} & \textbf{0.849} \\
dog & 0.674 & 0.756 & 0.891 & 0.679 & 0.784 & 0.669 & 0.877 & \underline{0.897} & \textbf{0.916} \\
camel & 0.600 & 0.683 & 0.787 & 0.727 & 0.649 & 0.646 & 0.808 & \underline{0.834} & \textbf{0.858} \\
kite-surf & 0.761 & 0.807 & 0.907 & 0.790 & 0.841 & 0.767 & \underline{0.913} & 0.890 & \textbf{0.937} \\
parkour & 0.640 & 0.709 & 0.822 & 0.716 & 0.707 & 0.656 & 0.835 & \underline{0.876} & \textbf{0.882} \\
breakdance & 0.640 & 0.748 & 0.778 & 0.796 & 0.747 & 0.626 & 0.809 & \textbf{0.875} & \underline{0.856} \\
\midrule
Mean & 0.615 & 0.740 & 0.814 & 0.744 & 0.736 & 0.688 & 0.835 & \underline{0.859} & \textbf{0.889} \\
\bottomrule
\end{tabular}}
\end{table*}


\begin{table*}[!t]
\centering
\caption{Full-frame SSIM on the eight selected DAVIS RGB sequences at $\mathrm{SR}=0.4$. Best and second-best values per sequence are bold and underlined.}
\label{tab:supp-davis-ssim-sr40}
\scriptsize
\setlength{\tabcolsep}{3pt}
\resizebox{\textwidth}{!}{%
\begin{tabular}{lccccccccc}
\toprule
Sequence & DRO-TFF & HLRTF & KBR & LRTFR & OTLRM & TNN & SALTS & S$^2$NTNN & \textbf{Ours}\\
\midrule
bear & 0.731 & 0.789 & 0.878 & 0.796 & 0.778 & 0.821 & \underline{0.880} & 0.868 & \textbf{0.937} \\
drift-chicane & 0.536 & 0.834 & 0.857 & 0.860 & 0.892 & 0.855 & 0.904 & \underline{0.908} & \textbf{0.938} \\
cows & 0.693 & 0.705 & 0.842 & 0.791 & 0.678 & 0.705 & 0.833 & \underline{0.846} & \textbf{0.887} \\
dog & 0.726 & 0.827 & \underline{0.936} & 0.753 & 0.832 & 0.772 & 0.923 & 0.928 & \textbf{0.945} \\
camel & 0.687 & 0.747 & 0.857 & 0.785 & 0.734 & 0.743 & 0.866 & \underline{0.878} & \textbf{0.895} \\
kite-surf & 0.803 & 0.856 & \underline{0.944} & 0.830 & 0.867 & 0.835 & 0.944 & 0.918 & \textbf{0.959} \\
parkour & 0.689 & 0.770 & 0.882 & 0.774 & 0.773 & 0.744 & 0.882 & \underline{0.913} & \textbf{0.915} \\
breakdance & 0.744 & 0.799 & 0.858 & 0.861 & 0.806 & 0.735 & 0.885 & \textbf{0.909} & \underline{0.901} \\
\midrule
Mean & 0.701 & 0.791 & 0.882 & 0.806 & 0.795 & 0.776 & 0.889 & \underline{0.896} & \textbf{0.922} \\
\bottomrule
\end{tabular}}
\end{table*}


\subsection{Temporal-gradient MAE on Missing Tubes}
\label{app:rgb-temporal-mae}

Temporal-gradient MAE (T-MAE) measures the reconstruction error in
frame-to-frame intensity changes at missing tubes. Under tube-wise sampling,
the set $\mathcal M$ of missing pixel--channel indices remains fixed across
frames. For the reference video $X$ and the data-consistent reconstruction
$\widehat X$, we compute
\begin{equation}
\label{eq:supp-temporal-mae}
\begin{aligned}
\mathrm{TMAE}_{\mathrm{miss}}
&=\frac{1}{(T-1)|\mathcal M|}
\sum_{t=1}^{T-1}\sum_{p\in\mathcal M}
\left|\Delta_t\widehat X_p-\Delta_t X_p\right|,\\
\Delta_t X_p&=X_{p,t+1}-X_{p,t},\quad p=(i,j,c).
\end{aligned}
\end{equation}
Video intensities are normalized to $[0,1]$, and lower T-MAE indicates
more accurate temporal differences at missing locations. The main paper uses
the same metric. Tables~\ref{tab:supp-davis-temporal-sr20}--
\ref{tab:supp-davis-temporal-sr40} report results for all nine methods at each rate
on the eight selected videos. Ours has the lowest T-MAE in $19$ of the
$24$ video--rate cases; the mean results appear in
Table~\ref{tab:supp-davis-temporal-mean}.

\begin{table*}[!t]
\centering
\caption{Temporal-gradient MAE on the missing tubes of eight selected DAVIS 360p RGB videos (lower is better). The rightmost column is the mean across the three sampling rates. The lowest and second-lowest values are bold and underlined, respectively.}
\label{tab:supp-davis-temporal-mean}
\begingroup
\renewcommand{\arraystretch}{1.06}
\setlength{\tabcolsep}{3pt}
\footnotesize
\begin{tabular*}{0.72\textwidth}{@{\extracolsep{\fill}}lrrrr@{}}
\toprule
Method & SR=0.2 & SR=0.3 & SR=0.4 & Mean \\
\midrule
DRO-TFF & 0.05233 & 0.04917 & 0.04842 & 0.04997 \\
HLRTF & 0.04856 & 0.04419 & 0.04177 & 0.04484 \\
KBR & 0.04038 & 0.03505 & 0.03024 & 0.03522 \\
LRTFR & 0.04876 & 0.04687 & 0.04511 & 0.04691 \\
OTLRM & 0.04926 & 0.04853 & 0.04510 & 0.04763 \\
TNN & 0.05166 & 0.04687 & 0.04265 & 0.04706 \\
SALTS & 0.03740 & 0.03330 & \underline{0.02998} & 0.03356 \\
S$^2$NTNN & \underline{0.03592} & \underline{0.03270} & 0.03028 & \underline{0.03296} \\
Ours & \textbf{0.03079} & \textbf{0.02732} & \textbf{0.02495} & \textbf{0.02769} \\
\bottomrule
\end{tabular*}
\endgroup
\end{table*}


\begin{table*}[!t]
\centering
\caption{Missing-tube temporal-gradient MAE (lower is better) on the eight selected DAVIS RGB videos at $\mathrm{SR}=0.2$. Lowest and second-lowest entries in each row are bold and underlined.}
\label{tab:supp-davis-temporal-sr20}
\begingroup
\renewcommand{\arraystretch}{1.05}
\setlength{\tabcolsep}{2.4pt}
\footnotesize
\begin{tabular*}{\textwidth}{@{\extracolsep{\fill}}lrrrrrrrrr@{}}
\toprule
Video & DRO-TFF & HLRTF & KBR & LRTFR & OTLRM & TNN & SALTS & S$^2$NTNN & Ours \\
\midrule
bear & 0.05797 & 0.05304 & 0.05298 & 0.05348 & 0.06816 & 0.05450 & \underline{0.04945} & 0.05136 & \textbf{0.03522} \\
drift-chicane & 0.00984 & 0.00788 & 0.00826 & 0.00778 & 0.00774 & 0.00829 & \underline{0.00582} & 0.00772 & \textbf{0.00519} \\
cows & 0.06826 & 0.06613 & 0.06049 & 0.06096 & 0.08213 & 0.06906 & 0.05932 & \underline{0.05791} & \textbf{0.04894} \\
dog & 0.05979 & 0.05876 & 0.03597 & 0.06479 & 0.04745 & 0.06331 & 0.03790 & \underline{0.03373} & \textbf{0.02997} \\
camel & 0.07584 & 0.06873 & 0.06159 & 0.06715 & 0.07146 & 0.07220 & 0.05667 & \underline{0.05437} & \textbf{0.04865} \\
kite-surf & 0.03892 & 0.03644 & 0.02622 & 0.03664 & 0.03264 & 0.03894 & \underline{0.02473} & 0.02761 & \textbf{0.02046} \\
parkour & 0.06584 & 0.06157 & 0.04328 & 0.06902 & 0.05377 & 0.06577 & 0.04247 & \textbf{0.03483} & \underline{0.03624} \\
breakdance & 0.04215 & 0.03595 & 0.03423 & 0.03025 & 0.03071 & 0.04124 & 0.02285 & \textbf{0.01981} & \underline{0.02162} \\
\midrule
\textbf{Mean} & 0.05233 & 0.04856 & 0.04038 & 0.04876 & 0.04926 & 0.05166 & 0.03740 & \underline{0.03592} & \textbf{0.03079} \\
\bottomrule
\end{tabular*}
\endgroup
\end{table*}


\begin{table*}[!t]
\centering
\caption{Missing-tube temporal-gradient MAE (lower is better) on the eight selected DAVIS RGB videos at $\mathrm{SR}=0.3$. Lowest and second-lowest entries in each row are bold and underlined.}
\label{tab:supp-davis-temporal-sr30}
\begingroup
\renewcommand{\arraystretch}{1.05}
\setlength{\tabcolsep}{2.4pt}
\footnotesize
\begin{tabular*}{\textwidth}{@{\extracolsep{\fill}}lrrrrrrrrr@{}}
\toprule
Video & DRO-TFF & HLRTF & KBR & LRTFR & OTLRM & TNN & SALTS & S$^2$NTNN & Ours \\
\midrule
bear & 0.05461 & 0.05164 & 0.04833 & 0.05276 & 0.06662 & 0.05048 & \underline{0.04534} & 0.04730 & \textbf{0.03147} \\
drift-chicane & 0.00883 & 0.00777 & 0.00707 & 0.00766 & 0.00808 & 0.00763 & \underline{0.00535} & 0.00766 & \textbf{0.00476} \\
cows & 0.06434 & 0.06312 & 0.05631 & 0.05933 & 0.08274 & 0.06610 & 0.05480 & \underline{0.05318} & \textbf{0.04516} \\
dog & 0.05721 & 0.05001 & 0.02944 & 0.06180 & 0.04395 & 0.05518 & 0.03148 & \underline{0.02944} & \textbf{0.02509} \\
camel & 0.07078 & 0.06277 & 0.05362 & 0.06542 & 0.07754 & 0.06676 & 0.05120 & \underline{0.04871} & \textbf{0.04335} \\
kite-surf & 0.03616 & 0.03467 & 0.02196 & 0.03593 & 0.02973 & 0.03530 & \underline{0.02115} & 0.02575 & \textbf{0.01739} \\
parkour & 0.06342 & 0.05234 & 0.03674 & 0.06507 & 0.04946 & 0.05763 & 0.03713 & \textbf{0.03157} & \underline{0.03175} \\
breakdance & 0.03799 & 0.03115 & 0.02695 & 0.02699 & 0.03009 & 0.03585 & 0.01998 & \textbf{0.01795} & \underline{0.01962} \\
\midrule
\textbf{Mean} & 0.04917 & 0.04419 & 0.03505 & 0.04687 & 0.04853 & 0.04687 & 0.03330 & \underline{0.03270} & \textbf{0.02732} \\
\bottomrule
\end{tabular*}
\endgroup
\end{table*}


\begin{table*}[!t]
\centering
\caption{Missing-tube temporal-gradient MAE (lower is better) on the eight selected DAVIS RGB videos at $\mathrm{SR}=0.4$. Lowest and second-lowest entries in each row are bold and underlined.}
\label{tab:supp-davis-temporal-sr40}
\begingroup
\renewcommand{\arraystretch}{1.05}
\setlength{\tabcolsep}{2.4pt}
\footnotesize
\begin{tabular*}{\textwidth}{@{\extracolsep{\fill}}lrrrrrrrrr@{}}
\toprule
Video & DRO-TFF & HLRTF & KBR & LRTFR & OTLRM & TNN & SALTS & S$^2$NTNN & Ours \\
\midrule
bear & 0.05395 & 0.05057 & \underline{0.04169} & 0.05203 & 0.06059 & 0.04679 & 0.04171 & 0.04420 & \textbf{0.02885} \\
drift-chicane & 0.00843 & 0.00767 & 0.00642 & 0.00755 & 0.00666 & 0.00707 & \underline{0.00501} & 0.00763 & \textbf{0.00443} \\
cows & 0.06266 & 0.06194 & 0.05034 & 0.05808 & 0.08042 & 0.06301 & 0.05055 & \underline{0.04977} & \textbf{0.04244} \\
dog & 0.05663 & 0.04574 & \underline{0.02407} & 0.05843 & 0.04151 & 0.04815 & 0.02676 & 0.02664 & \textbf{0.02211} \\
camel & 0.06962 & 0.05941 & 0.04698 & 0.06421 & 0.06963 & 0.06179 & 0.04642 & \underline{0.04517} & \textbf{0.03980} \\
kite-surf & 0.03591 & 0.03277 & \underline{0.01821} & 0.03525 & 0.02913 & 0.03193 & 0.01832 & 0.02406 & \textbf{0.01528} \\
parkour & 0.06258 & 0.04796 & 0.03161 & 0.06076 & 0.04526 & 0.05107 & 0.03319 & \underline{0.02863} & \textbf{0.02845} \\
breakdance & 0.03762 & 0.02810 & 0.02264 & 0.02461 & 0.02761 & 0.03143 & \underline{0.01788} & \textbf{0.01613} & 0.01823 \\
\midrule
\textbf{Mean} & 0.04842 & 0.04177 & 0.03024 & 0.04511 & 0.04510 & 0.04265 & \underline{0.02998} & 0.03028 & \textbf{0.02495} \\
\bottomrule
\end{tabular*}
\endgroup
\end{table*}


\subsection{Quality Trends and Computational Cost}
\label{app:selected-trends}

Figure~7 of the main paper shows mean PSNR, SSIM, and T-MAE
across sampling rates. Figure~\ref{fig:supp-selected-margins}
shows per-sequence missing-region PSNR margins over
the strongest alternative at each rate. The median
wall-clock time for Ours is $1859.2$ s per RGB video,
compared with $466.9$ s for S$^2$NTNN and $45.4$ s for
LRTFR, under the implementations reported in
Table~\ref{tab:supp-rgb-selected-runtime}.

\begin{figure*}[!t]
\centering
\includegraphics[width=1.0\textwidth]{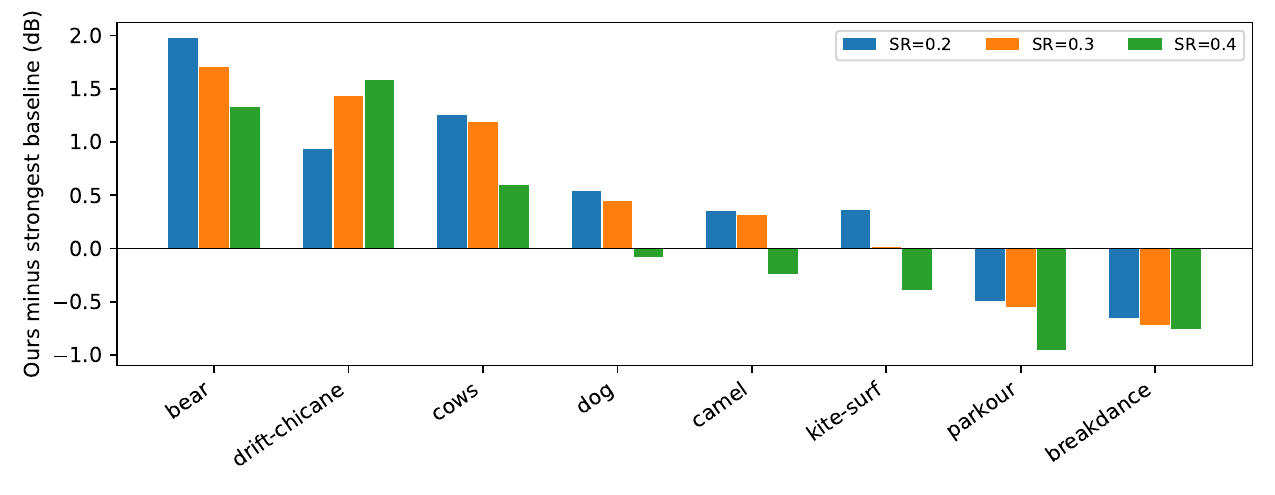}
\caption{Difference in missing-region PSNR between Ours and the
strongest alternative baseline for each selected sequence and sampling rate.}
\label{fig:supp-selected-margins}
\end{figure*}

\subsection{Additional RGB Visual Comparisons}
\label{app:rgb-selected-visuals}

The main paper shows Bear, Drift-chicane, and Cows. Comparisons for the remaining
five selected videos are shown below at sampling rates $0.2$, $0.3$,
and $0.4$ (from top to bottom). For each video, all methods use the same displayed frame and two
fixed ROIs. The eleven columns show the observed input,
nine reconstructions, and ground truth. The PSNR labels report the
missing-region score over the full 50-frame sequence.

\begin{figure*}[!t]
\centering
\includegraphics[page=1,width=1.0\textwidth]{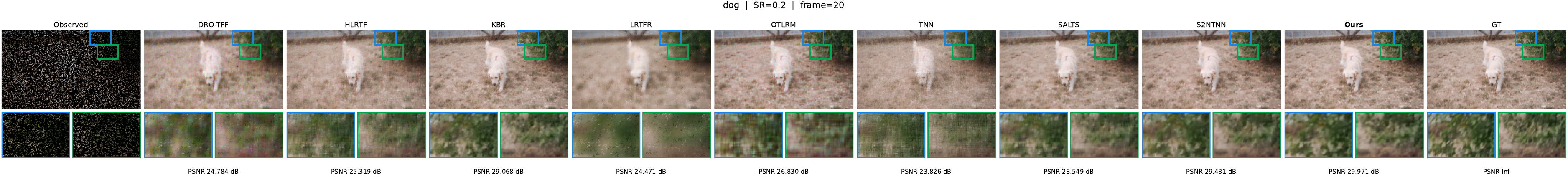}\\[1.0mm]\includegraphics[page=2,width=1.0\textwidth]{figures/supplementary/full_visuals/per_video_supplement_11col_dog.pdf}\\[1.0mm]\includegraphics[page=3,width=1.0\textwidth]{figures/supplementary/full_visuals/per_video_supplement_11col_dog.pdf}
\caption{RGB reconstruction comparison on Dog at sampling rates 0.2, 0.3, and 0.4 (from top to bottom), with common ROIs across methods. The annotated PSNR values report missing-region PSNR over all 50 frames.}
\label{fig:supp-rgb-dog}
\end{figure*}

\begin{figure*}[!t]
\centering
\includegraphics[page=1,width=1.0\textwidth]{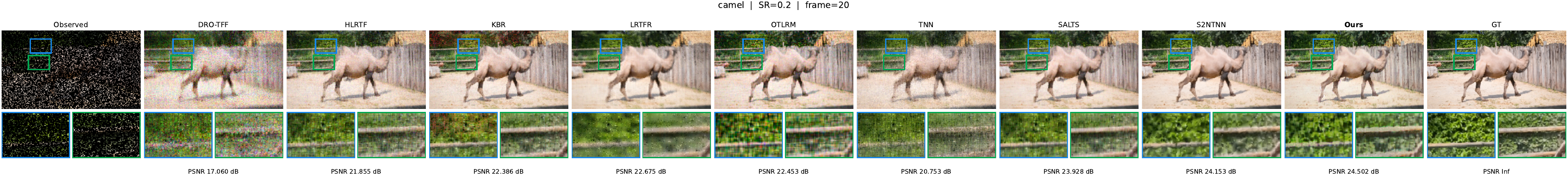}\\[1.0mm]\includegraphics[page=2,width=1.0\textwidth]{figures/supplementary/full_visuals/per_video_supplement_11col_camel.pdf}\\[1.0mm]\includegraphics[page=3,width=1.0\textwidth]{figures/supplementary/full_visuals/per_video_supplement_11col_camel.pdf}
\caption{RGB reconstruction comparison on Camel at sampling rates 0.2, 0.3, and 0.4 (from top to bottom), with common ROIs across methods. The annotated PSNR values report missing-region PSNR over all 50 frames.}
\label{fig:supp-rgb-camel}
\end{figure*}

\begin{figure*}[!t]
\centering
\includegraphics[page=1,width=1.0\textwidth]{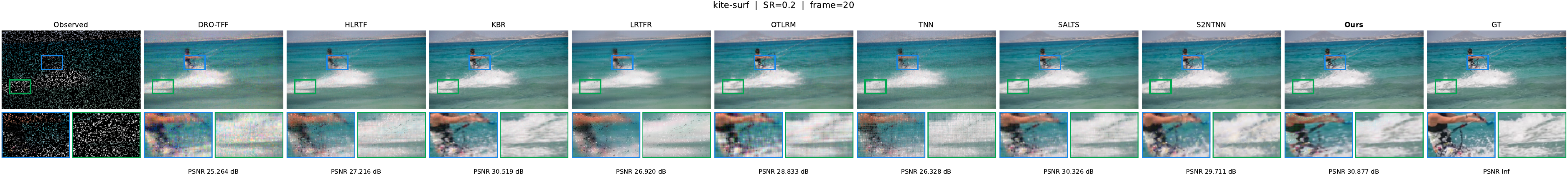}\\[1.0mm]\includegraphics[page=2,width=1.0\textwidth]{figures/supplementary/full_visuals/per_video_supplement_11col_kite-surf.pdf}\\[1.0mm]\includegraphics[page=3,width=1.0\textwidth]{figures/supplementary/full_visuals/per_video_supplement_11col_kite-surf.pdf}
\caption{RGB reconstruction comparison on Kite-surf at sampling rates 0.2, 0.3, and 0.4 (from top to bottom), with common ROIs across methods. The annotated PSNR values report missing-region PSNR over all 50 frames.}
\label{fig:supp-rgb-kite-surf}
\end{figure*}

\begin{figure*}[!t]
\centering
\includegraphics[page=1,width=1.0\textwidth]{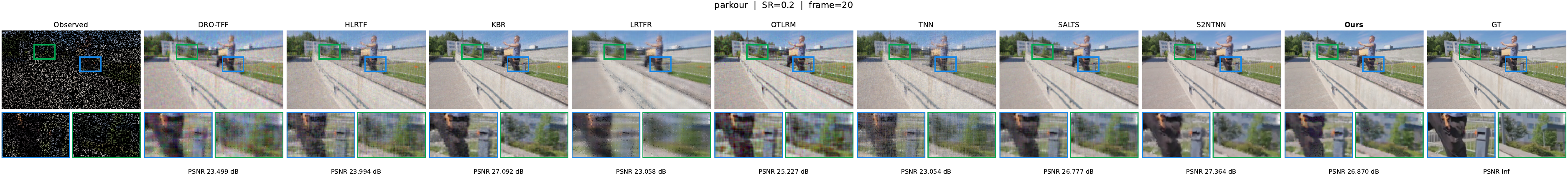}\\[1.0mm]\includegraphics[page=2,width=1.0\textwidth]{figures/supplementary/full_visuals/per_video_supplement_11col_parkour.pdf}\\[1.0mm]\includegraphics[page=3,width=1.0\textwidth]{figures/supplementary/full_visuals/per_video_supplement_11col_parkour.pdf}
\caption{RGB reconstruction comparison on Parkour at sampling rates 0.2, 0.3, and 0.4 (from top to bottom), with common ROIs across methods. The annotated PSNR values report missing-region PSNR over all 50 frames.}
\label{fig:supp-rgb-parkour}
\end{figure*}

\begin{figure*}[!t]
\centering
\includegraphics[page=1,width=1.0\textwidth]{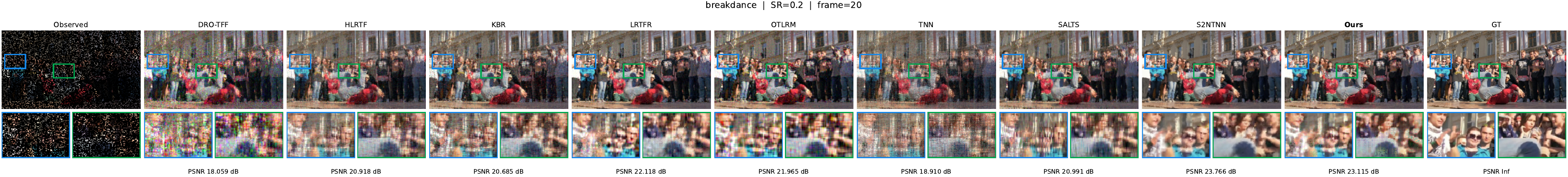}\\[1.0mm]\includegraphics[page=2,width=1.0\textwidth]{figures/supplementary/full_visuals/per_video_supplement_11col_breakdance.pdf}\\[1.0mm]\includegraphics[page=3,width=1.0\textwidth]{figures/supplementary/full_visuals/per_video_supplement_11col_breakdance.pdf}
\caption{RGB reconstruction comparison on Breakdance at sampling rates 0.2, 0.3, and 0.4 (from top to bottom), with common ROIs across methods. The annotated PSNR values report missing-region PSNR over all 50 frames.}
\label{fig:supp-rgb-breakdance}
\end{figure*}